\newtheorem{theorem}{Theorem}
\newtheorem{lemma}[theorem]{Lemma}
\newtheorem{corollary}[theorem]{Corollary}
\begin{document}

%

%

\newcommand{\objective}{\mathcal{H}}
\newcommand{\objectiver}{\mathcal{H}_r}
\newcommand{\vf}{\mathcal{V}}
\newcommand{\metric}{\mathcal{G}}
\newcommand{\leaves}{\mathcal{L}}
\newcommand{\node}{\mathcal{N}}
\newcommand{\splits}{\mathcal{S}}
\newcommand{\bound}{\mathcal{B}}
\newcommand{\prob}{\mathbb{P}}
\newcommand{\expec}{\mathbb{E}}
\newcommand{\ind}{\mathds{1}}
\newcommand{\var}{\mathbb{V}}
\newcommand{\children}{\textrm{Ch}}
\newcommand{\opt}{T^*}
\newcommand{\regret}{\overline{R}}
\newcommand{\longmethod}{\textsc{Thompson Sampling Decision Trees}}
\newcommand{\method}{\textsc{TSDT}}
\newcommand{\longosdt}{\textsc{Optimal Sparse Decision Trees}}
\newcommand{\osdt}{\textsc{OSDT}}
\newcommand{\otmane}[1]{\textcolor{red}{#1}}

\twocolumn[

\aistatstitle{Online Learning of Decision Trees with Thompson Sampling}

\aistatsauthor{ Ayman Chaouki \And Jesse Read \And  Albert Bifet }

\aistatsaddress{ LIX, Ecole Polytechnique, IP Paris \\AI Institute, University of Waikato \And  LIX, Ecole Polytechnique, \\ IP Paris\And AI Institute, University of Waikato \\ LTCI, T\'el\'ecom Paris, IP Paris} ]

\begin{abstract}
  Decision Trees are prominent prediction models for interpretable Machine Learning. They have been thoroughly researched, mostly in the batch setting with a fixed labelled dataset, leading to popular algorithms such as C4.5, ID3 and CART. Unfortunately, these methods are of heuristic nature, they rely on greedy splits offering no guarantees of global optimality and often leading to unnecessarily complex and hard-to-interpret Decision Trees. Recent breakthroughs addressed this suboptimality issue in the batch setting, but no such work has considered the online setting with data arriving in a stream. To this end, we devise a new Monte Carlo Tree Search algorithm, \longmethod~(\method), able to produce optimal Decision Trees in an online setting. We analyse our algorithm and prove its almost sure convergence to the optimal tree. Furthermore, we conduct extensive experiments to validate our findings empirically. The proposed \method~outperforms existing algorithms on several benchmarks, all while presenting the practical advantage of being tailored to the online setting.
\end{abstract}

\section{INTRODUCTION}

Interpretable Machine Learning is crucial in sensitive domains, like medicine, where high-stakes decisions have to be justified. Due to their extraction of simple decision rules, Decision Trees (DTs) are very popular in this context. Unfortunately, finding the optimal DT is NP-complete \citep{laurent1976constructing}, and for this reason, popular batch algorithms greedily construct a DT by splitting its leaves according to some local gain metric, such approach is used in ID3 \citep{quinlan1986induction}, C4.5 \citep{quinlan2014c4} and CART \citep{breiman1984classification} to name a few. Due to this heuristic nature, these approaches offer no optimality guarantees. In fact, they often lead to suboptimal DTs that are unnecessarily complex and hard to interpret, contradicting the main motivation behind DTs. 

In many modern applications, data is supplied through a stream rather than a fixed data set, this renders most batch algorithms obsolete, which led to the emergence of the data stream (or online) learning paradigm \citep{bifet2009data}. The classic batch DT algorithms are ill-suited for online learning since they calculate a splitting gain metric on a whole data set. In response, \cite{domingos2000mining} introduced the VFDT algorithm, which constructs DTs in an online fashion. VFDT estimates the gain of each split using a statistical test based on Hoeffding's inequality. This approach yielded a principled algorithm and laid the foundation for subsequent developments \citep{hulten2001mining, bifet2009adaptive, manapragada2018extremely}, with advances primarily focusing on improving the quality of the statistical tests \citep{jin2003efficient, rutkowski2012decision, rutkowski2013decision}. Much like their batch counterparts, these online methods are heuristic in nature, and consequently, they are susceptible to the suboptimality issue.

In this work, we propose a method that circumvents these limitations, yielding an online algorithm proven to converge to the optimal DT. We consider online classification problems with categorical attributes, for which we seek the optimal DT balancing between the accuracy and the number of splits. To achieve this, we frame the problem as a Markov Decision Process (MDP) where the optimal policy leads to the optimal DT. We solve this MDP with a novel Monte Carlo Tree Search (MCTS) algorithm that we call \longmethod~(\method). \method~employs a Thompson Sampling policy that converges almost surely to the optimal policy. In our experiments, we highlight the limitations of traditional greedy online DT methods, such as VFDT and EFDT \citep{manapragada2018extremely}, and demonstrate how \method~effectively circumvents these shortcomings. Due to the lack of literature on optimal online DTs, we compare \method~with recent successful batch optimal DT algorithms by feeding benchmark datasets to \method~as streams, \method~clearly outperforms DL8.5 \citep{aglin2020learning} and matches or surpasses the performance of \osdt~\citep{hu2019optimal}.

\section{RELATED WORK}

In the batch setting, the suboptimality issue of DTs has been the subject of multiple research papers focused mainly on mathematical programming, see \citep{bennett1994global, bennett1996optimal, norouzi2015efficient, bertsimas2017optimal, verwer2019learning}. These methods optimise internal splits within a fixed DT structure, making the problem more manageable but potentially missing the optimal DT. Recently, branch and bound methods were proposed to mitigate this issue and yielded the DL8.5 algorithm \citep{aglin2020learning} and \osdt~\citep{hu2019optimal} among others. A subsequent algorithm, GOSDT \citep{lin2020generalized}, generalises \osdt~to other objective functions including F-score, AUC and partial area under the ROC convex hull. However, these methods are limited to binary attributes, necessitating a preliminary binary encoding of the data. Moreover, the choice of this binary encoding may significantly influence the complexity of the solution, as demonstrated in our experiments. All the aforementioned methods operate solely in the batch learning paradigm, lacking a straightforward extension to the online setting.

The closest work to ours is perhaps \citep{nunes2018monte} since the authors use MCTS, see \citep{browne2012survey} for a survey about MCTS. \cite{nunes2018monte} define a rollout policy that completes the selected DT with C4.5 on an induction set, then it estimates the value of the selected DT by evaluating its performance on a validation set. This approach does not differentiate between DTs of different complexities, in fact, the authors rely on a custom definition of terminal states in terms of predefined maximum depth and number of instances, alongside C4.5's pruning strategy. Additionally, by virtue of using C4.5, a pure batch algorithm, this algorithm is not applicable for data streams. 

Our proposed method is a Value Iteration approach \citep{sutton2018reinforcement} that uses Thompson Sampling policy within a MCTS framework. Unlike Temporal Difference methods, such as Q-Learning and SARSA, general convergence results for Monte Carlo methods remain an open theoretical question, noted in \citep[p.~99 and p.~103]{sutton2018reinforcement} as "one of the most fundamental open theoretical questions". Some convergence results were established under specific assumptions. \cite{wang2020convergence} prove almost sure optimal convergence of the policy for Monte Carlo with Exploring Starts, while \cite{dong2022convergence} show a similar result for Monte Carlo UCB. These results pertain to MDPs with finite random length episodes where the optimal policy does not revisit states. In our case, although our MDP features similar properties, the rewards are unknown and merely estimated. We investigate the convergence properties of MCTS with Thompson Sampling policy within our specific MDP. To the best of our knowledge, no prior work has carried such analysis under this assumption. The closest related work, found in \citep{bai2013bayesian}, only considers discounted MDPs with finite fixed horizons, and does not provide a formal convergence proof, see \citep[Section 3.5]{bai2013bayesian}.

\section{PROBLEM FORMULATION}

Let $X = \left( X^{\left( 1\right)}, \ldots, X^{\left( q\right)}\right)$ be the input with categorical attributes and $Y \in \{ 1, \ldots, K\}$ the class to predict. Data samples $\left( X_i, Y_i\right)$ arrive incrementally through a stream, they are i.i.d. and follow a joint probability distribution $P_{X, Y}$. Let $T$ be a DT and $\leaves\left( T\right)$ the set of leaves of $T$, for each leaf $l \in \leaves\left( T\right)$ and class $k \in \{ 1, \ldots, K\}$, let $p\left( l\right) = \prob\left[ X \in l\right]$ denote the probability of event "The subset of the input space, described by leaf $l$, contains $X$" and $p_k\left( l\right) = \prob\left[ Y=k | X \in l\right]$ the probability that $Y=k$ given $X \in l$. For any input $X$ we also denote $l\left( X\right)$ the leaf $l$ that contains $X$, i.e. the leaf $l$ such that $X \in l$. Let $\objective\left( T\right) = \prob\left[ T\left( X\right) = Y\right]$ be the accuracy of $T$ where $T\left( X\right) = T\left( l\left( X\right)\right) = \textrm{Argmax}_k\{p_k\left( l\left( X\right)\right)\}$ is the predicted class of $X$ according to $T$. If we define our objective to maximise as $\objective\left( T\right)$, then the full DT that exhaustively employs all the possible splits is a trivial solution. However, this solution is an uninterpretable DT of maximum depth that just classifies the inputs point-wise, as such, it is of no interest. We seek to balance between maximising the accuracy and minimising the complexity, the latter condition is for interpretation purposes. To this end, we introduce the regularised objective:
$$\objectiver\left( T\right) = \prob\left[ T\left( X\right) = Y\right] - \lambda\splits\left( T\right)$$
Where $\lambda \ge 0$ is a penalty parameter and $\splits\left( T\right)$ is the number of splits in $T$. We note that \cite{hu2019optimal} introduce a similar objective, but the authors penalise the number of leaves $|\leaves\left( T\right)|$ rather than the number of splits $\splits\left( T\right)$. Our choice is motivated by the MDP we define in the following section.

\subsection{Markov Decision Process (MDP)}

We introduce our undiscounted episodic MDP with finite random length episodes as follows:\\
\textbf{State:} Our state space is the space of DTs. \\
\textbf{Action:} There are two types of actions, split actions split a leaf with respect to an attribute and the terminal action ends the episode, in which case we transition from the current state $T$ to the terminal state denoted $\overline{T}$ which represents the same DT as $T$.\\
\textbf{Transition Dynamics:} When taking an action, we transition from state $T$ to state $T'$ deterministically, in which case we denote the transition $T \rightarrow T'$. The set of next states from $T$ is denoted $\children\left( T\right)$, which signifies the children of $T$.\\
\textbf{Reward:} In any non-terminal state $T$, split actions yield reward $-\lambda$ and the terminal action yields reward $\objective\left( T\right) = \prob\left[ T\left( X\right) = Y\right]$, \emph{which is unknown}. $r\left( T, T'\right)$ is the reward of the transition $T \rightarrow T'$.

In the next paragraph, we link the search for the optimal policy to that of the optimal DT.\\
A stochastic policy $\pi$ maps each non-terminal state $T$ to a distribution over $\children\left( T\right)$, for any $T' \in \children\left( T\right)$ we denote $\pi\left( T' | T\right)$ the probability of the transition $T \rightarrow T'$ according to $\pi$. If $\pi$ is deterministic, $\pi\left( T\right)$ is the next state from $T$ according to $\pi$. Let $T = T^{\left( 0\right)} \overset{\pi}{\rightarrow} T^{\left( 1\right)} \overset{\pi}{\rightarrow} \ldots, \overset{\pi}{\rightarrow} T^{\left( N\right)}$ be the episode that stems from following policy $\pi$ starting from state $T$; $T^{\left( N\right)} = \overline{T^{\left( N-1\right)}}$ is terminal. The value of $\pi$ at $T$ is defined as:
$$
\vf^{\pi}\left( T\right) = \expec\left[ \sum_{j=1}^N r\left( T^{\left( j-1\right)}, T^{\left( j\right)}\right)\right]
$$
For convenience, we also define, for all terminal states $\overline{T}$, $\vf^\pi\left( \overline{T}\right) = \objective\left( \overline{T}\right) = \objective\left( T\right)$. If $\pi$ is deterministic, then we get:
\begin{align*}
    \vf^\pi\left( T\right) = \lambda\splits\left( T\right) + \objectiver\left( T^{\left( N-1\right)}\right)
\end{align*}
Let $R$ denote the root state (DT with only one leaf), we have $\splits\left( R\right) = 0$ and therefore:
\begin{equation*}
    \vf^\pi\left( R\right) = \lambda\splits\left( R\right) + \objectiver\left( T^{\left( N-1\right)}\right) = \objectiver\left( T^{\left( N-1\right)}\right)
\end{equation*}
Let $\pi^* \in \textrm{Argmax}_\pi\vf^\pi\left( R\right)$, the optimal policy $\pi^*$ exists and is deterministic because our MDP is finite. Let $R \overset{\pi^*}{\rightarrow} \ldots, \overset{\pi^*}{\rightarrow} T^* \overset{\pi^*}{\rightarrow} \overline{T^*}$, then $\objectiver\left( T^*\right) = \vf^{\pi^*}\left( R\right) \ge \vf^{\pi}\left( R\right)$ for any policy $\pi$. On the other hand, any DT $T$ is constructed from a series of splits of the root $R$, thus there always exists a policy $\pi$ such that $R \overset{\pi}{\rightarrow} \ldots, \overset{\pi}{\rightarrow} T \overset{\pi}{\rightarrow} \overline{T}$, and consequently $\vf^\pi\left( R\right) = \objective_r\left( T\right)$. As a result, $\objectiver\left( T^*\right) \ge \objectiver\left( T\right)$ for any DT $T$, establishing the optimality of $T^*$. We can find $T^*$ by deriving $\pi^*$ first and then following it.

\subsection{Tree representation of the State-Action Space}

As is custom in MCTS, the State-Action space is represented as a Tree called the Search Tree. \textbf{We refer to the nodes of the Search Tree as Search Nodes to avoid confusion with the nodes of DTs}. These Search Nodes serve as representations of states, which are DTs in the context of our MDP. The edges within the Search Tree correspond to actions. Throughout this work, we refer to DTs, states and Search Nodes interchangeably. The root of the Search Tree is the initial state $R$ (the root DT), and its leaves, called Search Leaves, are the terminal states. Figure \ref{fig:search-tree} depicts a segment of the Search Tree.

\begin{figure}[t]
\centering
\includegraphics[width=0.35\textwidth]{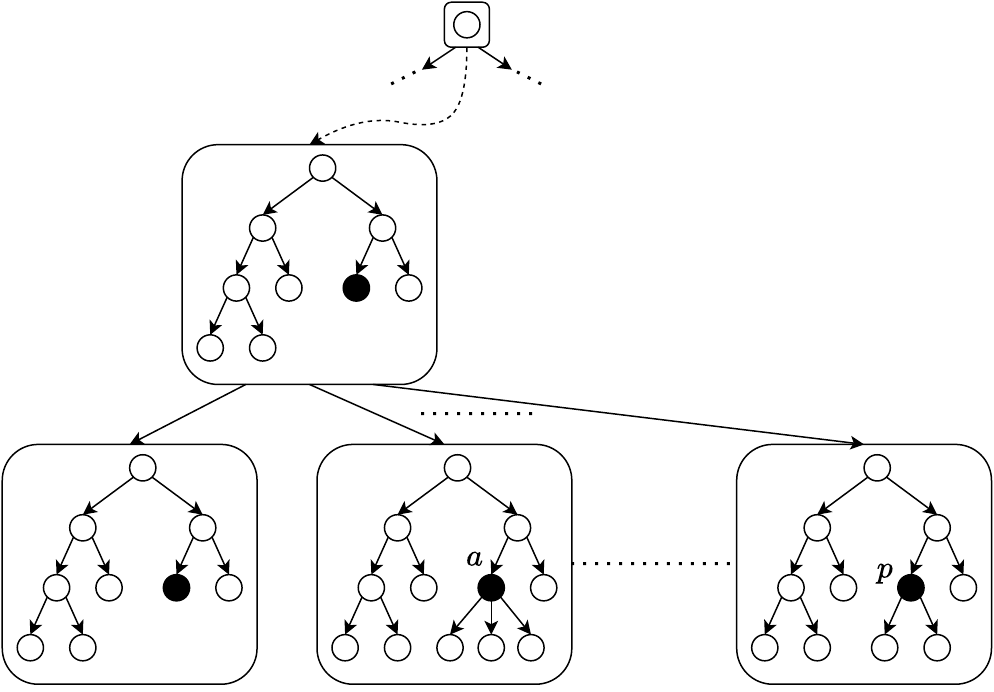}
\caption{Each Search Node is a state and each edge an action. The left-most edge is the terminal action, hence why both the parent and child Search Nodes represent the same DT. The remaining edges are split actions with respect to the black leaf.}
\label{fig:search-tree}
\end{figure}

\section{\method}

In this section, we introduce our method. If we know $\vf^{\pi^*}\left( \overline{T}\right) = \objective\left( T\right)$ for all Search Leaves $\overline{T}$, then we can Backpropagate these values up the Search Tree and recursively deduce $\vf^{\pi^*}\left( T\right)$ for all internal Search Nodes with the Bellman Optimality Equation:
\begin{equation}
    \vf^{\pi^*}\left( T\right) = \max_{T' \in \children\left( T\right)}\Big\{ -\lambda\ind\Big\{ T' \neq \overline{T}\Big\} + \vf^{\pi^*}\left( T'\right) \Big\} \label{eq:bellman-value}
\end{equation}
Unfortunately, the values $\vf^{\pi^*}\left( \overline{T}\right)$ are unknown, this prompts us to estimate them, but which Search Leaves should we prioritise? We need a policy with an efficient Exploration-Exploitation trade-off. Several notable options can be considered, among which UCB \citep{auer2002finite}, that was popularised, in the context of MCTS, by the UCT algorithm \citep{kocsis2006bandit}. UCB is out of the scope of this paper, we analyse it in one of our ongoing works. We consider Thompson Sampling instead. To use this policy, we need to estimate the values within a Bayesian framework.

\subsection{Estimating $\vf^{\pi^*}\left( \overline{T}\right)$ for a Search Leaf $\overline{T}$}

\label{sec:search-leaf}

For any Search Leaf $\overline{T}$, we have:
\begin{align}
    \vf^{\pi^*}\left( \overline{T}\right) &= \objective\left( T\right) = \prob\left[ T\left( X\right) = Y\right] \nonumber\\
    &= \sum_{l \in \leaves\left( T\right)}p\left( l\right)\expec\left[ \ind\{ T\left( X\right) = Y\}|X \in l\right] \label{eq:value-search-leaf}
\end{align}
From Equation \eqref{eq:value-search-leaf}, given observed data $\{ \left( X_i, Y_i\right)\}_{i=1}^N$ in $T$, we define the posterior on $\vf^{\pi^*}\left( \overline{T}\right)$ with:
\begin{equation}
    \theta_{\overline{T}} = \sum_{l \in \leaves\left( T\right)}\hat{p}\left( l\right)\theta_{T, l}
    \label{eq:prior-search-leaf}
\end{equation}
Where $\hat{p}\left( l\right)$ is an estimator of $p\left( l\right)$ and $\theta_{T, l}$ follows a posterior distribution on $\expec\left[ \ind\{ T\left( X\right) = Y\}|X \in l\right]$ given $\{ \left( X_i, Y_i\right)\}_{i=1}^N$. Since $\expec\left[ \ind\{ T\left( X\right) = Y\}|X \in l\right]$ is the mean of a Bernoulli variable, we are tempted to use its Beta conjugate prior with the following updates:
\begin{align*}
    \theta_{T, l} &\sim \textrm{Beta}\left( \alpha_{T, l}, \beta_{T, l}\right)\\
    \alpha_{T, l} &= 1 + \sum_{i=1}^N\ind\{ X_i \in l\}\ind\{ T\left( X_i\right) = Y_i\}; \\
    \beta_{T, l} &= 1 + \sum_{i=1}^N \ind\{ X_i \in l\} - \sum_{i=1}^N\ind\{ X_i \in l\}\ind\{ T\left( X_i\right) = Y_i\}
\end{align*}
However, the challenge pertains to the unknown nature of $T\left( X_i\right) = \textrm{Argmax}_{k}\{ p_k\left( l\left( X_i\right)\right)\}$. We solve this issue with the empirical average estimates:
\begin{align}
    \forall{l \in \leaves\left( T\right)}: \hat{p}_k^{\left( i\right)}\left( l\right) &= \frac{\sum_{j=1}^i\ind\{ X_j \in l\}\ind\{ Y_j = k\}}{\sum_{j=1}^i \ind\{ X_j \in l\}} \nonumber \\
    \hat{T}_i\left( l\right) &= \textrm{Argmax}_{k}\{ \hat{p}_k^{\left( i\right)}\left( l\right)\} \label{eq:estimate-predictor}
\end{align}
Then we rather update $\alpha_{T, l}$ and $\beta_{T, l}$ with:
\begin{align}
    \alpha_{T, l} &= 1 + \sum_{i=2}^N\ind\{ X_i \in l\}\ind\{ \hat{T}_{i-1}\left( X_i\right) = Y_i\}; \label{eq:alpha} \\
    \beta_{T, l} &= 1 + \sum_{i=2}^N \ind\{ X_i \in l\} \nonumber \\
    &- \sum_{i=2}^N\ind\{ X_i \in l\}\ind\{ \hat{T}_{i-1}\left( X_i\right) = Y_i\} \label{eq:beta}
\end{align}

Now $\theta_{\overline{T}}$ (Equation \eqref{eq:prior-search-leaf}) is a linear combination of the Beta variables $\theta_{T, l}$, and its distribution is not easy to infer. For this reason, we consider the common Normal approximation of the Beta distribution where we match the first two moments.
\begin{align}
    \theta_{T, l} &\sim \mathcal{N}\left( \mu_{T, l}, \left( \sigma_{T, l}\right)^2\right); \; \mu_{T, l} = \frac{\alpha_{T, l}}{\alpha_{T, l} + \beta_{T, l}} \label{eq:prior-search-leaf-leaf-1}\\
    \left( \sigma_{T, l}\right)^2 &= \frac{\alpha_{T, l}\beta_{T, l}}{\left( \alpha_{T, l} + \beta_{T, l}\right)^2\left( 1 + \alpha_{T, l} + \beta_{T, l}\right)} \label{eq:prior-search-leaf-leaf-2}
\end{align}

This makes $\theta_{\overline{T}} = \sum_{l \in \leaves\left( T\right)}\hat{p}\left( l\right)\theta_{T, l}$ a linear combination of Normal random variables, and therefore:
\begin{align}
    \theta_{\overline{T}} &\sim \mathcal{N}\left( \mu_{\overline{T}}, \left( \sigma_{\overline{T}}\right)^2\right); \; \mu_{\overline{T}} = \sum_{l \in \leaves\left( T\right)} \hat{p}\left( l\right)\mu_{T, l} \label{eq:prior-search-leaf-1}\\
    \left( \sigma_{\overline{T}}\right)^2 &= \sum_{l \in \leaves\left( T\right)} \hat{p}\left( l\right)^2 \left( \sigma_{T, l}\right)^2 \label{eq:prior-search-leaf-2}
\end{align}
We recall that this posterior distribution is conditioned on the observed data $\{ \left( X_i, Y_i\right)\}_{i=1}^N$ in $T$. To finalise the definition of $\theta_{\overline{T}}$, we still need to define $\hat{p}\left( l\right)$. We defer this task to Section \ref{sec:p(l)} as we are currently lacking some key insights from the Algorithm. For the time being, we assume having such estimator that is completely defined by $\{ X_i\}_{i=1}^N$ and consistent $\hat{p}\left( l\right) \xrightarrow[N \rightarrow \infty]{\textrm{a.s}} p\left( l\right)$.

\subsection{Estimating $\vf^{\pi^*}\left( T\right)$ for an internal Search Node $T$}

\label{sec:search-node}

Let $T$ be an internal Search Node, which is a non-terminal state. Value Iteration updates the estimate of $\vf^{\pi^*}\left( T\right)$ according to the Bellman Optimality Equation \eqref{eq:bellman-value}. Thus, we define the posterior on $\vf^{\pi^*}\left( T\right)$ given all the observed data $\{ \left( X_i, Y_i\right)\}_{i=1}^N$ in $T$ as:
\begin{equation}
    \theta_T = \max_{T' \in \children\left( T\right)}\Big\{ -\lambda\ind\Big\{ T' \neq \overline{T}\Big\} + \theta_{T'} \Big\} \label{eq:theta-internal}
\end{equation}
\textbf{What is the posterior distribution of $\theta_T$ given $\{ \left( X_i, Y_i\right)\}_{i=1}^N$?}

We suppose $\forall{T'} \in \children\left( T\right): \theta_{T'} \sim \mathcal{N}\left( \mu_{T'}, \left( \sigma_{T'}\right)^2\right)$. This is motivated by an inductive reasoning, indeed, if we show that $\theta_T$ is Normally distributed, then since the posteriors of all Search Leaves are Normal, as defined in Section \ref{sec:search-leaf}, we would recursively infer that the posteriors of all internal Search Nodes are Normal.\\
Unfortunately, the maximum of Normal variables, as defined in Equation \eqref{eq:theta-internal}, is not Normal, this observation is documented in \citep{clark1961greatest, sinha2007advances}. To solve this issue, a first approach is to use a Normal approximation of the distribution of the maximum in Equation \eqref{eq:theta-internal}, this is achieved by recursively applying Clark's mean and variance formula for the maximum of two Normal variables as demonstrated in \cite[Section 4.1]{10.5555/3023549.3023618}. This leads to our first version of \method, we present the details of this Backpropagation scheme in Appendix \ref{appendix:backpropagation}. Nevertheless, this approximation incurs a substantial computational cost as the number of children $|\children\left( T\right)|$ increases. Moreover, as outlined by \cite{sinha2007advances}, the order in which the recursive approximations are executed may significantly affect the quality of the overall approximation. For these reasons, we introduce an alternative, more straightforward approach to Backpropagating the posterior distributions $\theta_{T'}$ to $\theta_T$. Specifically, we assign to $\theta_T$ the posterior distribution of the child with maximum posterior mean:
\begin{equation}
    \widetilde{T} = \textrm{Argmax}_{T' \in \children\left( T\right)}\{ \mu_{T'}\}; \; \theta_T \sim \mathcal{N}\left( \mu_{\widetilde{T}}, \left( \sigma_{\widetilde{T}}\right)^2\right) \label{eq:theta-fast-tsdt}
\end{equation}
We call this second version of the Algorithm Fast-\method, it is more practical than the first one and exhibits better computational efficiency. While in general, the distribution in formula \eqref{eq:theta-fast-tsdt} may not serve as an accurate approximation for the distribution of the maximum in Equation \eqref{eq:theta-internal}, it progressively improves as $\theta_{T'}$, for children $T'$, concentrate around their means. This concentration occurs as more data is accumulated within $T'$, as exemplified in Equations \eqref{eq:prior-search-leaf-2} and \eqref{eq:prior-search-leaf-leaf-2} for Search Leaves. Furthermore, it is worth noting that Fast-\method~displays a substantial gain in computational efficiency and also (surprisingly!) in performance compared to \method.

\subsection{The Algorithm}

\label{sec:alg}

Algorithm \ref{alg:TSDT} is an abstract description of \method~and Fast-\method. In the following, we view it from the perspective of an incremental construction of the Search Tree representation. Initially, our Search Tree representation contains the root $R$ only, then at each iteration $t$, we follow the steps illustrated in Figure \ref{fig:mcts-tsdt}.
\begin{itemize}
    \item \textbf{Selection:} Line 4. Starting from $R$, descend the Search Tree by choosing a child according to the current policy $\pi_t$ until reaching a Search Leaf.
    \item \textbf{Simulation:} Line 5. We observe new $m$ incoming data from the stream in $T^{\left( N-1\right)}$. The objective is to use the accumulated observed data in $T^{\left( N-1\right)}$ to, either initialise the posteriors of children $T' \in \children\left( T^{\left( N-1\right)}\right)$ with $\theta_{T'} = \theta_{\overline{T'}}$ (Line 11), or to just update the posterior of $\theta_{T^{\left( N\right)}}$ (Line 14).
    These posteriors will in turn update the posterior of $\theta_T$, as per Section \ref{sec:search-node}, during Backpropagation.
    \item \textbf{Expansion:} Line 7. If $T^{\left( N-1\right)}$ is visited for the first time, we add its children Search Nodes $T' \in \children\left( T^{\left( N-1\right)}\right)$ to our Search Tree representation.
    \item \textbf{Backpropagation:} Loop from Line 16 to 21. Recursively update the posterior of the ancestors $\theta_{T^{\left( j\right)}}$ (Line 17), for $j=N-1$ to $0$, with the internal Search Nodes posterior updates as per Section \ref{sec:search-node} (formula \eqref{eq:theta-fast-tsdt} for Fast-\method, and the recursive Normal approximation of the maximum for \method). At the same time, update the Thompson Sampling policy (Loop from Line 18 to 20). 
\end{itemize}
In line 23, after $M$ iterations of \method, we define the greedy policy $\pi$ with respect to the posterior means. Then we unroll $\pi$: $R = T^{\left( 0\right)} \overset{\pi}{\rightarrow} T^{\left( 1\right)} \overset{\pi}{\rightarrow} \ldots, \overset{\pi}{\rightarrow} T^{\left( N\right)}$ and return the proposed solution $T^{\left( N\right)}$ (Lines 24, 25).

To complete the description of Algorithm \ref{alg:TSDT}, we still need to answer the following question: \textbf{How does the Simulation step allow us to initialise the posteriors for the children?} To answer this question, we introduce some new statistics.\\
Let $T$ be a Search Node that was simulated, and suppose we observe data $\{ \left( X_s, Y_s\right)\}_{s=1}^N$ in $T$. Let $l \in \leaves\left( T\right)$ be a leaf of $T$ and $T^{\left( l, i\right)} \in \children\left( T\right)$ the child that stems from splitting $l$ with respect to attribute $X^{\left( i\right)}$. Our objective here is to use data $\{ \left( X_s, Y_s\right)\}_{s=1}^N$ to initialise $\theta_{T^{\left( l, i\right)}} = \theta_{\overline{T^{\left( l, i\right)}}}$. According to Equations \eqref{eq:prior-search-leaf-leaf-1}, \eqref{eq:prior-search-leaf-leaf-2}, \eqref{eq:prior-search-leaf-1} and \eqref{eq:prior-search-leaf-2}, this is achieved by calculating $\alpha_{T^{\left( l, i\right)}, l'}, \beta_{T^{\left( l, i\right)}, l'}$ for all leaves $l' \in \leaves\left( T^{\left( l, i\right)}\right)$ (and also $\hat{p}\left( l'\right)$ which, we remind the reader, is deferred to Section \ref{sec:p(l)}). Let $l' \in \leaves\left( T^{\left( l, i\right)}\right)$, if $l'$ is not a child of $l$, then $l'$ is a common leaf between $T^{\left( l, i\right)}$ and $T$, thus $\alpha_{T^{\left( l, i\right)}, l'} = \alpha_{T, l'}$ and $\beta_{T^{\left( l, i\right)}, l'} = \beta_{T, l'}$, these are straightforwardly calculated with the observed data $\{ \left( X_i, Y_i\right)\}_{i=1}^N$ in $T$ using Equations \eqref{eq:estimate-predictor}, \eqref{eq:alpha} and \eqref{eq:beta}. If $l'$ is a child of $l$, then there exists $j$ such that $l' = l_{ij}$ is the child of $l$ that corresponds to attribute $X^{\left( i\right)}$ being equal to $j$. For $N' \le N$, we define:
\begin{align*}
    n_{ijk}\left( N', l\right) &= \sum_{s=1}^{N'}\ind\{ Y_s = k\}\ind\{ X_s^{(i)} = j\}\ind\{ X_s \in l\}
\end{align*}
On the subset $\{ \left( X_s, Y_s\right)\}_{s=1}^{N'}$, $n_{ijk}\left( N', l\right)$ is the number of inputs $X_s \in l$ of class $k$ satisfying $X_s^{\left( i\right)} = j$.
Then we can track the estimates:
\begin{align*}
    &\hat{T}_{N'}^{\left( l, i\right)}\left( l_{ij}\right) = \textrm{Argmax}_k \Bigg\{ \frac{n_{ijk}\left( N', l\right)}{\sum_k n_{ijk}\left( N', l\right)} \Bigg\} \\
    &m\left( l_{ij}\right) = \sum_{s=2}^N\ind\{ \hat{T}_{s-1}^{\left( l, i\right)}\left( X_s\right) = Y_s\}\ind\{ X_s^{(i)} = j\}\ind\{ X_s \in l\}\\
    &\alpha_{T^{\left( l, i\right)}, l_{ij}} = 1 + m\left( l_{ij}\right); \\
    &\beta_{T^{\left( l, i\right)}, l_{ij}} = 1 + \sum_{ijk}n_{ijk}\left( N, l\right) - m\left( l_{ij}\right)
\end{align*}
With this, the initialisation $\theta_{T^{\left( l, i\right)}} = \theta_{\overline{T^{\left( l, i\right)}}}$ is now complete. In summary, the introduced $n_{ijk}\left( N', l\right)$ statistics at the leaves $l \in \leaves\left( T\right)$ allow us to use the accumulated observed data $\{ \left( X_i, Y_i\right)\}_{i=1}^N$ in $T$ to initialise the posterior $\theta_{T'} = \theta_{\overline{T'}}$ for all children $T' \in \children\left( T\right)$.

\begin{algorithm}[tb]
\caption{\method, Fast-\method}\label{alg:TSDT}
\begin{algorithmic}[1]

\State \textbf{Input:} $M$ number of iterations, $m$ number of observed samples per Simulation, $\lambda \ge 0$.
\State Initialise $\pi_0\left( \overline{T} \Big| T\right) = 1$ and \emph{fully\_expanded($T$) = False} for all non-terminal states $T$

\For{$t=1$ to $M$}
    \State Unroll $\pi_t$: $R = T^{\left( 0\right)} \overset{\pi_t}{\rightarrow} T^{\left( 1\right)} \overset{\pi_t}{\rightarrow} \ldots, \overset{\pi_t}{\rightarrow} T^{\left( N\right)}$
    \State \emph{Simulate($T^{\left( N-1\right)}$)}
    \If{not \emph{fully\_expanded($T^{\left( N-1\right)}$)}}
        \State \emph{Expand($T^{\left( N-1\right)}$)}
        \State \emph{fully\_expanded($T^{\left( N-1\right)}$) = True}
        \For{$T' \in \children\left( T^{\left( N-1\right)}\right)$} 
            \State Update the posterior of $\theta_{\overline{T'}}$
            \State Initialise $\theta_{T'} = \theta_{\overline{T'}}$
        \EndFor
    \Else
        \State Update the posterior of $\theta_{T^{\left( N\right)}}$
    \EndIf
    \For{$j=N-1$ to $0$}
        \State Update the posterior of $\theta_{T^{\left( j\right)}}$
        \For{$T' \in \children\left( T^{\left( j\right)}\right)$}
            \State Update the policy at $T^{\left( j\right)} \rightarrow T'$
            $$\pi_{t+1}\left( T' \Big| T^{\left( j\right)}\right) = \prob\left[ \theta_{T'} = \max_{T" \in \children\left( T^{\left( j\right)}\right)}\{ \theta_{T"}\}\right]$$
        \EndFor

    \EndFor
\EndFor
\State Define $\pi\left( T\right) = \textrm{Argmax}_{T' \in \children\left( T\right)}\Big\{ -\lambda\ind\{ T' \neq \overline{T}\} + \mu_{T'} \Big\}$ for all non-terminal states $T$
\State Unroll $\pi$: $R = T^{\left( 0\right)} \overset{\pi}{\rightarrow} T^{\left( 1\right)} \overset{\pi}{\rightarrow} \ldots, \overset{\pi}{\rightarrow} T^{\left( N\right)}$
\State \Return $T^{\left( N\right)}$
\end{algorithmic}
\end{algorithm}

In the following, we provide the optimal convergence result for both \method~and Fast-\method.
\begin{theorem}
    \label{thm:main}
    Let time $t$ denote the number of iterations of \method~and Fast-\method, then any Search Node $T$ satisfies the following:
    $$
    \mu_T \xlongrightarrow[t \rightarrow \infty]{\textrm{a.s}} \vf^{\pi^*}\left( T\right), \left( \sigma_T\right)^2 \xlongrightarrow[t \rightarrow \infty]{\textrm{a.s}} 0
    $$
    and any internal Search Node $T$ satisfies: 
    $$\pi_t\left( \pi^*\left( T\right) \Big| T\right) \xlongrightarrow[t \rightarrow \infty]{\textrm{a.s}} 1$$
\end{theorem}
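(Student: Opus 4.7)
The plan is to combine two nested inductions on the Search Tree. Observe first that, because the attributes are categorical and finite and each leaf of a DT can only be split on attributes not yet used on its root-to-leaf path, the Search Tree has bounded depth and only finitely many nodes, so both inductions terminate. A top-down induction, rooted at $R$, establishes that every Search Node is visited infinitely often (i.o.); a bottom-up induction, rooted at the Search Leaves, then delivers the convergence of $\mu_T$, $\sigma_T^2$, and of $\pi_t$.

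For the top-down step I would induct on the depth of $T$ in the Search Tree. The root is visited at every iteration. Assume $T$ is visited i.o.\ and pick any $T' \in \children(T)$. Under Thompson Sampling, $\pi_t(T' \mid T) = \prob[\theta_{T'} = \max_{T'' \in \children(T)} \theta_{T''}]$ is strictly positive whenever the sibling posteriors are non-degenerate Normals with finite positive variances. Suppose, towards a contradiction, that $T'$ is visited only finitely often. Then the posterior parameters of $T'$ freeze past some iteration, so $\sigma_{T'}^2 \geq c > 0$ and $\mu_{T'}$ is bounded; a Gaussian tail bound against the siblings' posteriors then lower-bounds $\pi_t(T' \mid T)$ by a fixed $c' > 0$. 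A conditional second Borel--Cantelli argument on the filtration generated by the algorithm, combined with $T$ being visited i.o., forces $T'$ to be visited i.o., a contradiction. Thus every child, and inductively every Search Node, is visited i.o.

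For the bottom-up step, consider first a Search Leaf $\overline T$ visited i.o. The SLLN applied to the i.i.d.\ stream yields $\hat p_k^{(i)}(l) \to p_k(l)$ a.s., so eventually the majority-vote predictor $\hat T_{i-1}(l)$ coincides with the Bayes predictor $T(l) = \textrm{Argmax}_k p_k(l)$; a second application of SLLN to the counts in \eqref{eq:alpha}--\eqref{eq:beta} gives $\mu_{T,l} \to \expec[\ind\{T(X) = Y\} \mid X \in l]$ a.s., while $\sigma_{T,l}^2 \to 0$ because the denominator $1 + \alpha_{T,l} + \beta_{T,l}$ diverges. Plugging into \eqref{eq:prior-search-leaf-1}--\eqref{eq:prior-search-leaf-2} and using the assumed consistency $\hat p(l) \to p(l)$ yields $\mu_{\overline T} \to \objective(T) = \vf^{\pi^*}(\overline T)$ and $\sigma_{\overline T}^2 \to 0$. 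For the inductive step on an internal Search Node $T$, suppose each child $T'$ satisfies $\mu_{T'} \to \vf^{\pi^*}(T')$ and $\sigma_{T'}^2 \to 0$ a.s. For Fast-\method, the argmax $\widetilde T$ in \eqref{eq:theta-fast-tsdt} (read with the natural $-\lambda\ind\{T' \neq \overline T\}$ correction consistent with Line 23) stabilises a.s.\ on some optimal child of $T$, so $\mu_T \to \vf^{\pi^*}(T)$ and $\sigma_T^2 \to 0$ directly. For \method~with Clark's recursion, a continuity argument applies: as every input variance vanishes, the recursive two-at-a-time Normal approximation collapses mean-wise to the plain maximum of the input means, which converges to $\vf^{\pi^*}(T)$ by the Bellman equation \eqref{eq:bellman-value}.

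Policy convergence then follows: once $\{\mu_{T'}\}$ concentrate at distinct limits and $\{\sigma_{T'}^2\}$ vanish, a direct Gaussian tail computation yields $\pi_{t+1}(T' \mid T) \to 1$ at $T' = \pi^*(T)$ and to $0$ elsewhere. The main obstacle is the infinite-visits step: the posterior at $T'$ is only refined when $T'$ itself is selected, while the probability of selecting $T'$ depends on the current posteriors of its siblings, creating a circular dependency resolved only by the freezing plus Borel--Cantelli argument above. Secondary care is needed to handle non-uniqueness of $\pi^*(T)$ (standardly sidestepped via a generic-ties assumption on $P_{X,Y}$) and to verify that Clark's recursion preserves the mean of the maximum in the zero-variance limit; both become routine once infinite visits are in hand.
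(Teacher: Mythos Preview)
Your proposal is correct and follows essentially the same route as the paper: the paper packages your top-down step as Lemma~\ref{lemma:visits-conv} and Corollary~\ref{cor:visits-conv} (infinite visits via a freezing/positive-probability argument equivalent to your Borel--Cantelli step), your bottom-up base case as Lemma~\ref{lemma:conv-search-leaves} (SLLN once the empirical predictor stabilises), your inductive step as Lemma~\ref{lemma:conv-general} (an explicit induction on $|\children(T)|$ through Clark's formulas, matching your continuity-at-zero-variance argument), and your policy step as Lemma~\ref{lemma:proba-opt}. Your explicit separation into a top-down infinite-visits pass followed by a bottom-up convergence pass is arguably a cleaner presentation than the paper's lemma sequence, and your flag on non-unique $\pi^*(T)$ is apt---the paper silently assumes $\Delta_j>0$ in Lemma~\ref{lemma:proba-opt}.
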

Theorem \ref{thm:main} states the concentration of $\theta_T$ around the optimal value $\vf^{\pi^*}\left( T\right)$ for any Search Node $T$. Additionally, it asserts the concentration of the policy $\pi_t\left( .|T\right)$ around the optimal action $\pi^*\left( T\right)$ for any internal Search Node $T$. This Theorem provides an asymptotic guarantee of optimality, which is valuable. However, it would be ideal to have some finite time guarantees in the form of PAC-bounds or rates of convergence. Unfortunately, such guarantees are primarily derived for simpler settings like bandit problems. In the context of MDPs, most convergence guarantees are asymptotic, and the issue of finite-time guarantees remains open in many cases. We believe that the tail inequality we derive in Theorem \ref{thm:visits-proba} marks a first step towards achieving this goal in a future work. The idea is that by controlling the concentration of the posteriors at the level of Search Leaves, it may be possible to propagate this control up the Search Tree to the root state $R$. This would allow us to derive a time-dependent concentration probability of the posterior of $\theta_R$ around the true $\vf^{\pi^*}\left( R\right)$. However, this Backpropagation reasoning is a non-trivial challenge that warrants further exploration in future work.
\begin{theorem}
\label{thm:visits-proba}
Let $T$ be an internal Search Node with $\children\left( T\right) = \{ T_1, \ldots, T_w\}$ Search Leaves and $w \ge 2$. Let $t$ denote the number of visits of $T$ and $N_{T_j}\left( t\right)$ the number of visits of $T_j$ up to $t$. Define  $M_w = 1 + \sqrt{\frac{2}{\sqrt{3}}}\textrm{erfc}^{-1}\left( \frac{1}{w-1}\right)$, then $\forall{T_j  \in \children\left( T\right)}$:
\begin{align*}
    &\prob\left[ N_{T_j}\left( t\right)m \le \frac{\log t}{4|\leaves\left( T_j\right)|M_w}\right] \le \\
    &\exp\left[ -\frac{2}{t}\left( \frac{t^{3/4}}{\sqrt{\pi}\phi\left( t\right)} - \frac{\log t}{4m|\leaves\left( T_j\right)|M_w^2}\right)^2\right]
\end{align*}
Where $\phi\left( t\right) = \sqrt{\frac{\log t}{4}} + \sqrt{\frac{\log t}{4}+2}$
\end{theorem}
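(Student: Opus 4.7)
The plan is to obtain a lower bound on the Thompson Sampling selection probability $\pi_s(T_j|T)$ that stays non-trivial whenever $T_j$ has been under-visited, and then invoke Azuma--Hoeffding on the martingale of centered visit indicators to rule out persistent under-visitation. The key leverage is that each $T_j$ is a Search Leaf, so its posterior $\theta_{T_j} \sim \mathcal{N}(\mu_{T_j}, \sigma_{T_j}^2)$ has a variance controlled explicitly in terms of the sample counts via Equations \eqref{eq:prior-search-leaf-leaf-1}--\eqref{eq:prior-search-leaf-2}.

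First, I would obtain a lower bound on $\sigma_{T_j}^2$ on the event of interest. Since the total number of samples accumulated in $T_j$ after $N_{T_j}(t)$ visits is $N_{T_j}(t)m$, these are distributed among the $|\leaves(T_j)|$ leaves of $T_j$. Propagating the Beta-to-Normal variance formula through the weighted sum in Equation \eqref{eq:prior-search-leaf-2}, and using the expected ordering $\hat{p}(l) \sim N_l / (N_{T_j}(t)m)$, I obtain a bound of the form $\sigma_{T_j}^2 \ge c/(N_{T_j}(t)m |\leaves(T_j)|)$ for a universal constant $c$. The numerology is calibrated so that on the event $\{N_{T_j}(t)m \le \log t / (4|\leaves(T_j)|M_w)\}$ this translates into $\sigma_{T_j} \ge M_w / \phi(t)$.

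Second, I would lower-bound $\pi_s(T_j|T) = \prob[\theta_{T_j} = \max_i \theta_{T_i}]$. Conditioning on the current posteriors of the competitors $T_i$, this maximum event is implied by $\theta_{T_j}$ simultaneously exceeding each of the $w-1$ other draws. A union bound over the competitors combined with the Mills ratio for the Gaussian tail shows that, under the variance lower bound from Step 1, one has $\pi_s(T_j|T) \ge 1/(\sqrt{\pi}\phi(t) t^{1/4})$ uniformly in $s$. The threshold $\sqrt{2/\sqrt{3}}\,\textrm{erfc}^{-1}(1/(w-1))$ embedded in $M_w$ is exactly the union-bound correction needed to ensure each competitor is beaten with probability at least $1-1/(w-1)$, while the $\sqrt{2/\sqrt{3}}$ prefactor absorbs a Mills-ratio constant.

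Third, define $Y_s = \ind\{T_j \text{ is selected at visit } s\} - \pi_s(T_j|T)$; the $Y_s$ form a bounded martingale difference sequence with $|Y_s| \le 1$, so by Azuma--Hoeffding,
\begin{equation*}
\prob\left[ N_{T_j}(t) \le \sum_{s=1}^t \pi_s(T_j|T) - \eta \right] \le \exp\left(-\frac{2\eta^2}{t}\right).
\end{equation*}
On the bad event, Step 2 gives $\sum_{s=1}^t \pi_s(T_j|T) \ge t^{3/4}/(\sqrt{\pi}\phi(t))$, and the assumed visit-count threshold, after an $M_w$-rescaling needed to convert ``variance large enough'' into ``selection probability large enough'', plays the role of the upper endpoint in the Hoeffding gap; this produces the stated exponent $\exp(-\tfrac{2}{t}(t^{3/4}/(\sqrt{\pi}\phi(t)) - \log t/(4m|\leaves(T_j)|M_w^2))^2)$. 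The main obstacle is constant chasing: the competitor posteriors $\theta_{T_i}$ are random and adaptively updated, so the lower bound on $\pi_s(T_j|T)$ in Step 2 must hold uniformly over their realizations, and careful bookkeeping of how the $M_w$ from the variance-lower-bound step interacts with the $M_w$ from the Mills-ratio step is what produces the $1/M_w^2$ (rather than $1/M_w$) in the final inequality. A secondary subtlety is Step 1, where the lower bound on $\sigma_{T_j}^2$ must be valid under worst-case allocation of samples across the leaves of $T_j$, which will rely on the precise definition of $\hat{p}(l)$ deferred to Section \ref{sec:p(l)}.
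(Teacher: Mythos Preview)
Your three-step skeleton (variance lower bound $\to$ selection-probability lower bound $\to$ concentration) matches the paper's, and Step~1 is essentially Lemma~\ref{lemma:std-lower-bound}. However, Step~2 as you describe it has a real gap, and your account of where $M_w$ comes from is off.

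\textbf{The gap in Step 2.} A union bound on the complement gives $\pi_s(T_j\mid T)\ge 1-\sum_{i\neq j}\prob[\theta_{T_j}\le\theta_{T_i}]$, but each term $\prob[\theta_{T_j}\le\theta_{T_i}]$ can be close to $1$ whenever $\mu_{T_i}>\mu_{T_j}$, so this bound is vacuous in the regime of interest. Having $\sigma_{T_j}$ large does \emph{not} make these individual probabilities small; it only keeps them bounded away from $1$. The paper does not try to lower-bound the max event directly. Instead it first treats $w=2$, showing $\prob[\theta_{T_j}\ge\theta_{T_i}]\ge\tfrac12\textrm{erfc}\bigl(|\mu_{T_i}-\mu_{T_j}|/(\sqrt{2}\sigma_{T_j})\bigr)$ and then lower-bounding $\textrm{erfc}$. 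For general $w$ it \emph{reduces to $w=2$} by constructing an auxiliary Gaussian competitor $\theta_{T'}\sim\mathcal{N}\bigl(\max_{i\neq j}\mu_{T_i}+f_w(t),\,\max_{i\neq j}\sigma_{T_i}^2\bigr)$ with $f_w(t)=2\sigma_{T'}\,\textrm{erfc}^{-1}(1/(w-1))$, chosen so that $\prob[\theta_{T'}\ge\max_{i\neq j}\theta_{T_i}]\ge\tfrac12$; then $\prob[\theta_{T_j}>\max_{i\neq j}\theta_{T_i}]\ge\prob[\theta_{T_j}>\theta_{T'}]$, and the two-child bound applies. The constant $M_w$ is precisely the uniform bound on $|\mu_{T'}-\mu_{T_j}|$: the ``$1$'' comes from $\mu\in[0,1]$, and the $\sqrt{2/\sqrt{3}}\,\textrm{erfc}^{-1}(1/(w-1))$ comes from $f_w(t)$ together with the crude bound $\sigma_{T'}^2\le 1/\sqrt{12}$. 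So $M_w$ is a \emph{mean-gap} bound in the reduced two-child problem, not a Mills-ratio or union-bound correction as you suggest; this is also why it enters once in the visit threshold (via $|\mu_{T'}-\mu_{T_j}|/\sigma_{T_j}$) and a second time in the exponent (via the Hoeffding $\epsilon$), producing the $1/M_w$ versus $1/M_w^2$ discrepancy you noticed.

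\textbf{Concentration step.} The paper does not use Azuma--Hoeffding on the adapted indicators. It first observes the monotonicity ``$N_{T_j}(t)$ small $\Rightarrow$ $N_{T_j}(t')$ small for all $t'\le t$'', so on the bad event the selection-probability lower bound $\mathcal{C}(t)$ holds at \emph{every} past step; it then couples with i.i.d.\ $Z_i\sim\textrm{Bernoulli}(\mathcal{C}(t))$ and applies ordinary Hoeffding to $\sum Z_i$. Your Azuma route would need exactly this monotonicity argument to make the predictable lower bound on $\pi_s$ hold pathwise, which you do not mention; without it the martingale inequality does not control the event you want.
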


\begin{figure}
  \centering
  \includegraphics[width=.4\textwidth]{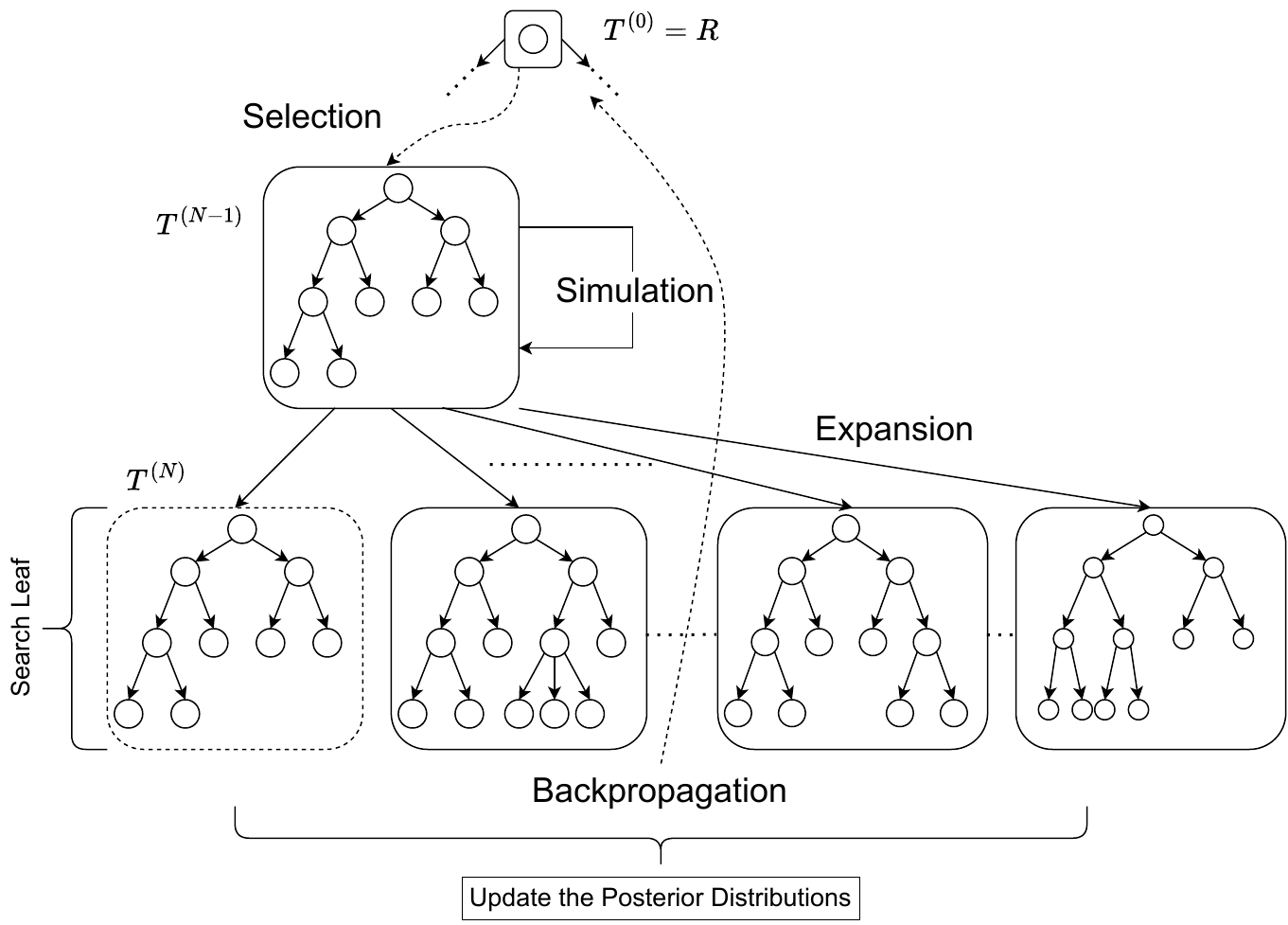}
  \caption{One iteration of \method. The Search Node in dashed lines is the Search Leaf $T^{\left( N\right)} = \overline{T^{\left( N-1\right)}}$.}
  \label{fig:mcts-tsdt}
\end{figure}

\subsection{Estimator $\hat{p}\left( l\right)$}

\label{sec:p(l)}

Let $T$ be a DT and $l \in \leaves\left( T\right)$ some leaf. We recall from Equations \eqref{eq:value-search-leaf} and \eqref{eq:prior-search-leaf} that we need an estimator $\hat{p}\left( l\right)$ of $p\left( l\right) = \prob\left[ X \in l\right]$. Suppose we observe data $\{ \left( X_i, Y_i\right)\}_{i=1}^N$ in $T$, the most straightforward estimator is the empirical average. However, Algorithm \ref{alg:TSDT} incorporates mechanisms that make it sample efficient. It does not copy Decision Tree nodes, thus the $n_{ijk}\left( N, \eta\right)$, of any node $\eta$, are updated whenever data $\left( X_s, Y_s\right)$ is observed (during Simulation) with $X_s \in \eta$, regardless of which Search Node was simulated. In addition, even though $n_{ijk}\left( N, l\right)$ can be used to calculate the empirical averages:
\begin{align*}
    \hat{p}\left( l\right) &= \frac{\sum_{ijk}n_{ijk}\left( N, l\right)}{\sum_{l \in \leaves\left( T\right)}\sum_{ijk}n_{ijk}\left( N, l\right)}\\
    \hat{p}\left( l_{ij}\right) &= \frac{\sum_{k}n_{ijk}\left( N, l\right)}{\sum_{l \in \leaves\left( T\right)}\sum_{ijk}n_{ijk}\left( N, \eta\right)}
\end{align*}
for any leaf $l \in \leaves\left( T\right)$ and any child node $l_{ij}$ of $l$. $n_{ijk}\left( N, l\right)$ cannot calculate the empirical average for the children of $l_{ij}$. This limited scope, along with not copying Decision Tree nodes, cause the observed marginal distribution of input $X$ to shift from the true marginal distribution. We explain this phenomenon in the next paragraph and illustrate it in Figure \ref{fig:degeneracy}.

When expanding $R$, the observed data in $r$ update the empirical averages for nodes $a$ and $b$, then when $A$ is selected, simulated and expanded, node $a$ in $C$ has its empirical average estimator updated using this newly observed data (during the Simulation of $A$) accumulated with the old data (from the Simulation of $R$). On the other hand however, for nodes $c$ and $d$, the update only involves the newly observed data in $b$ (during the Simulation of $A$).
This leads to skewed estimators, $\hat{p}\left( a\right)$ overestimates $p\left( a\right)$ while $\hat{p}\left( c\right)$ and $\hat{p}\left( d\right)$ underestimate $p\left( c\right)$ and $p\left( d\right)$, and the greater the difference in depth between $a$ and $c, d$, the greater the overestimation and underestimation effects are, we call this phenomenon \textbf{``Weights Degeneracy"}.\\
A second source of Weights Degeneracy is not copying the DT nodes, indeed $\hat{p}\left( a\right)$ is updated not only when $A$ or $C$ are simulated but also when $D$ is simulated, in the latter case, the sufficient statistics of $\hat{p}\left( c\right)$ and $\hat{p}\left( d\right)$ are not updated at all, making the overestimation/underestimation wider!

We avoid both Weights Degeneracy causes by defining a new estimator based on the chain rule:
\begin{align*}
    \prob\left[ X \in l\right] = &\prob\left[ X \in l, X \in \textrm{Parent}\left( l\right), \ldots, X \in \textrm{root}\right]\\
    =&\prob\left[ X \in l | X \in \textrm{Parent}\left( l\right)\right]\times \ldots\times\prob\left[ X \in \textrm{root}\right]
\end{align*}
We estimate each term $\prob\left[ X \in \eta | X \in \textrm{Parent}\left( \eta\right)\right]$ with $
\hat{p}\left( \eta | \textrm{Parent}\left( \eta\right)\right) = \frac{n\left( N, \eta\right)}{\sum_{\psi \in \textrm{Sib}\left( \eta\right)}n\left( N, \psi\right)}$, where $\textrm{Sib}\left( \eta\right)$ is the set of siblings of node $\eta$, including $\eta$ itself, and $n\left( N, \eta\right)$ is the number of observed samples with inputs in $\eta$. This yields the product estimator:
$$
\hat{p}\left( l\right) = \hat{p}\left( l | \textrm{Parent}\left( l\right
)\right)\times \ldots \times 1
$$
Since estimates $\hat{p}\left( \eta | \textrm{Parent}\left( \eta\right)\right)$ only involve nodes at the same depth ($\eta$ and its siblings), both sources of Weights Degeneracy are avoided, and by the Strong Law of Large numbers, we have the consistency:
$$\hat{p}\left( \eta | \textrm{Parent}\left( \eta\right)\right) \xlongrightarrow[]{\textrm{a.s}} \prob\left[ X \in \eta | X \in \textrm{Parent}\left( \eta\right)\right]$$
As more and more samples are observed in $\textrm{Parent}\left( \eta\right)$, and we deduce the consistency of our new estimator $\hat{p}\left( l\right) \xlongrightarrow[N \rightarrow \infty]{\textrm{a.s}} p\left( l\right)$.
\begin{figure}[tb]
  \centering
  \includegraphics[width=.2\textwidth]{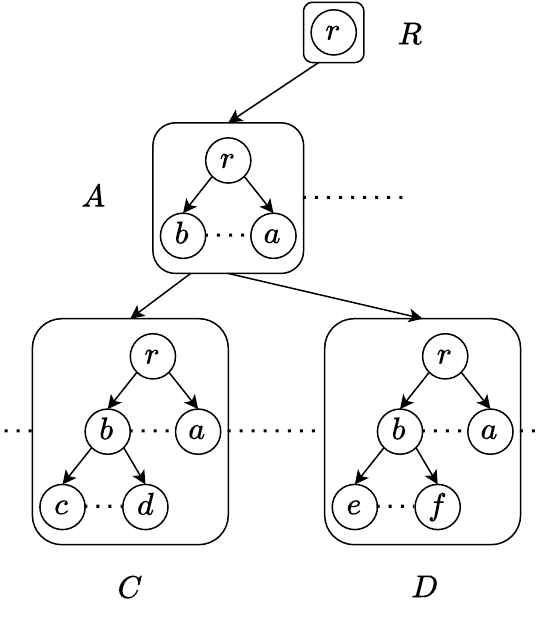}
  \caption{The Weights Degeneracy phenomenon.}
  \label{fig:degeneracy}
\end{figure}

\section{EXPERIMENTS}

\label{sec:experiments}

\begin{figure}
  \centering
  \includegraphics[width=.44\textwidth]{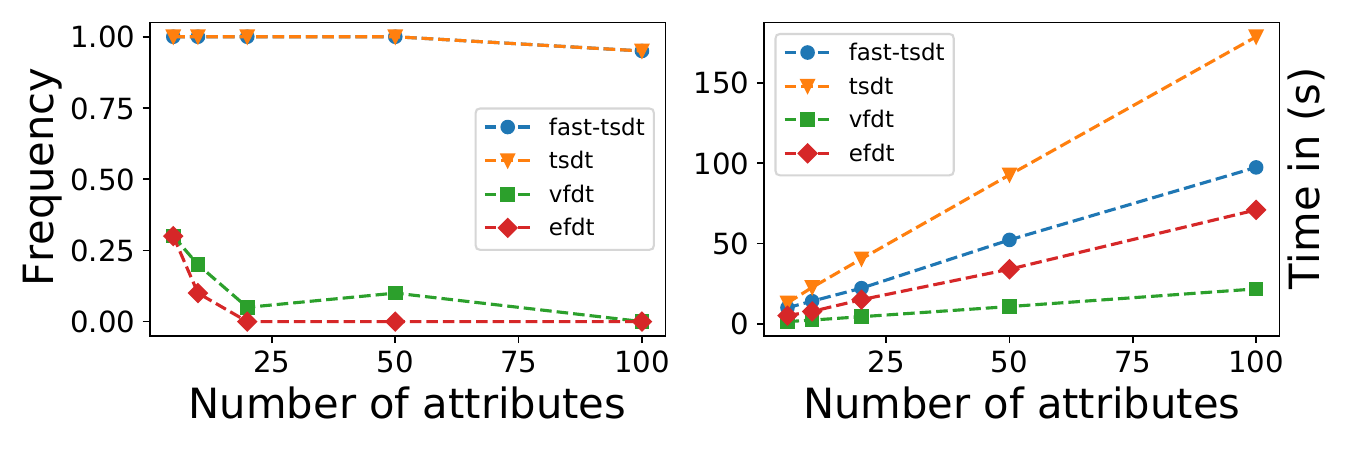}
  \caption{Comparison of VFDT, EFDT, \method~and Fast-\method. Left: Frequency of perfect convergence; Right: Average running time in seconds.}
  \label{fig:synth}
\end{figure}

\begin{figure*}[tb]
  \centering
  \includegraphics[width=0.8\textwidth]{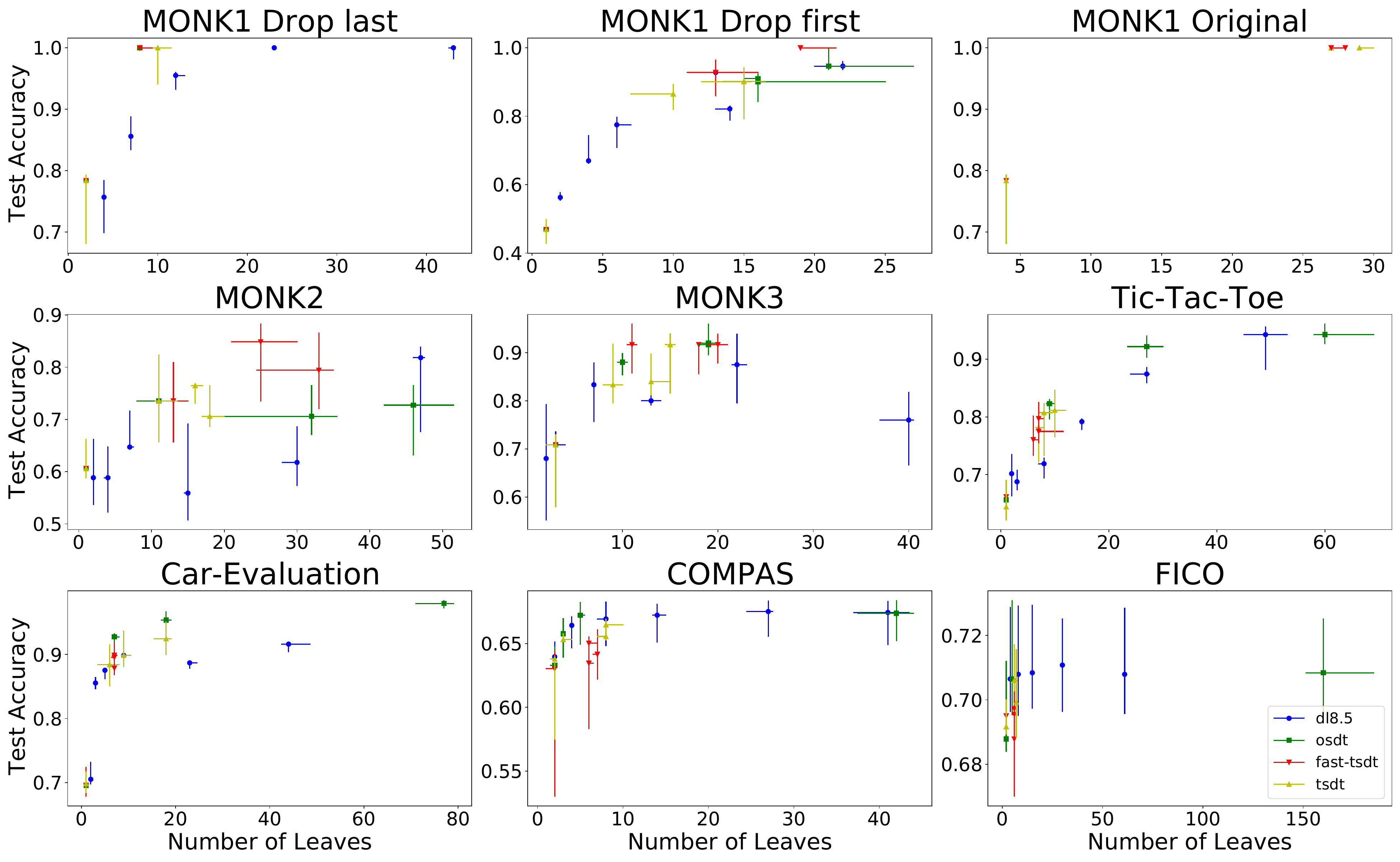}
  \caption{Cross-validation test accuracy comparison between \method, Fast-\method, \osdt~and DL8.5 as a function of the number of leaves.}
  \label{fig:test_accuracies}
\end{figure*}

Our first experiment highlights an important weakness of the classic greedy DT methods, and showcases how \method~and Fast-\method~circumvent this shortcoming. In the second experiment, we compare our methods with recent optimal batch DT algorithms DL8.5 and \osdt~on standard real world benchmarks. All the experimental details and additional results are provided in Appendix \ref{appendix:experiments}. Furthermore, our implemented code is available at \url{https://github.com/Chaoukia/Thompson-Sampling-Decision-Trees} along with the datasets we used.

We construct a challenging classification problem for greedy DT methods. In this setting, all the attributes are uninformative, in the sense that their true splitting gain metrics are all equal, making greedy methods VFDT and EFDT choose an attribute arbitrarily with tie-break. Concretely, We consider a binary classification problem with binary i.i.d. uniform attributes where $Y = 1$ if $X^{\left( 1\right)} = 0, X^{\left( 2\right)} = 0$ or $X^{\left( 1\right)} = 1, X^{\left( 2\right)} = 1$ and $Y=0$ otherwise (Figure \ref{fig:concept} provides a visualisation of the optimal DT). Attributes $X^{\left( 3\right)}, \ldots, X^{\left( q\right)}$ are irrelevant, resulting in uniform class distributions in both leaves, regardless of the attribute chosen for the root split, even when selecting one of the relevant attributes $X^{\left( 1\right)}$ or $X^{\left( 2\right)}$. Consequently, VFDT and EFDT arbitrarily choose the attribute to employ for the first split, and will continue with these arbitrary choices until attribute $X^{\left( 1\right)}$ or $X^{\left( 2\right)}$ is chosen, resulting in unnecessarily deep DTs. We compare VFDT, EFDT and \method~on settings with different numbers of attributes $q=5, 10, 20, 50, 100$. We perform 20 runs and report the frequency of perfect convergence, i.e, convergence to the optimal DT that only employs $X^{\left( 1\right)}$ and $X^{\left( 2\right)}$, and the average running time. Figure \ref{fig:synth} presents a clear trend: VFDT and EFDT rarely attain perfect convergence, especially when $q$ is large. In contrast, both \method~and Fast-\method~consistently achieve perfect convergence. Additionally, as anticipated, Fast-\method~demonstrates superior computational efficiency compared to \method.

In our second experiment, we conduct a comparison between \method, Fast-\method, \osdt, and DL8.5, even though the latter two are batch algorithms. This choice is due to the absence of prior research on optimal online DTs, to the best of our knowledge. Furthermore, the source codes of both DL8.5 and \osdt~are publicly available. Unfortunately, we could not include a comparison with the work by \cite{nunes2018monte} as the authors have not made their code publicly accessible. In this regard, we also draw attention to \citep[Table 1]{nunes2018monte}, which demonstrates how that algorithm is prohibitively slow, taking up to 105 hours in an instance. It is worth noting that our experiments exclude GOSDT because our focus is on accuracy and complexity comparison, and GOSDT is primarily an extension of \osdt~to other objective functions beyond accuracy. We follow the experimental protocol from \citep{hu2019optimal} with its datasets. We perform a 5-fold crossvalidation with different values of the hyperparameters (maximum depth for DL8.5 and $\lambda$ for \osdt~and \method), and we report in Figure \ref{fig:test_accuracies} the quartiles of the test accuracy and the number of leaves. For MONK1, \cite{hu2019optimal} utilised a One-Hot Encoding that excludes the last category of each attribute, which results in an optimal DT with 8 leaves. However, when the first category is dropped instead, the problem becomes significantly more challenging, with an optimal DT having over 18 leaves (further details are available in Appendix \ref{appendix:experiments}). On this latter problem, Figure \ref{fig:test_accuracies} clearly indicates that Fast-\method~outperforms all other methods, followed by \method. In fact, Fast-\method~is the only method that achieves $100\%$ test accuracy with 19 leaves. Moreover, unlike \osdt~and DL8.5, our methods do not necessitate binary attributes. Therefore, they can be directly applied to the original MONK1 data. In this scenario, the optimal DT representation of $Y$ with the lowest complexity consists of 27 leaves, and it is successfully retrieved by Fast-\method. \method, on the other hand, identifies a slightly more complex DT with 28 leaves. For MONK2 and MONK3, Fast-\method~demonstrates the best accuracy to number of leaves frontier. On the remaining datasets, our methods do not produce DTs with a large number of leaves, and \osdt~performs slightly better. In Appendix \ref{appendix:experiments}, we provide the execution times, DL8.5 is the fastest algorithm but also performs the least effectively, while \osdt~and \method~frequently reach their time limit of 10 minutes. On the other hand, Fast-\method~strikes a good balance between speed and performance.

\section{CONCLUSIONS, LIMITATIONS AND FUTURE WORK}

We devised \method, a new family of MCTS algorithms for constructing optimal online Decision Trees. We provided strong convergence results for our method and highlighted how it circumvents the suboptimality issue of the standard methods. Furthermore, \method~showcases similar or better accuracy-complexity trade-off compared to recent successful batch optimal DT algorithms, all while being tailored to handling data streams. For now, we are still limited to categorical attributes and our theoretical analysis only provides asymptotic convergence results. It would be desirable to derive some finite-time guarantees, in the form of PAC-bounds or rates of convergence, this is the aim of our future work. To conclude, this paper opens further possibilities for defining other MCTS algorithms with different policies, such as UCB and $\epsilon$-greedy, in the context of optimal online DTs.

\subsubsection*{Acknowledgements}

We thank Emilie Kaufmann for her insightful discussions, Otmane Sakhi for his helpful comments on the writing of the paper, and the anonymous reviewers for their valuable feedback.

\bibliography{references}

\newpage
\onecolumn
\appendix

\section{Table of Notations}

\label{appendix:notation}

\begin{table}[htbp]\caption{Table of Notation}
\begin{center}
\begin{tabular}{r c p{10cm} }
\toprule
$X$ & $=$ & $\left( X^{\left( 1\right)}, \ldots, X^{\left( q\right)}\right)$, the input.\\
$Y$ & $\in$ & $\{ 1, \ldots, K\}$, the class.\\
$T$ & $\triangleq$ & State, Decision Tree, Search Node.\\
$\overline{T}$ & $\triangleq$ & Terminal state that stems from taking the terminal action in $T$.\\
$T^{\left( l, i\right)}$ & $\triangleq$ & child Search Node of $T$ that stems from splitting leaf $l \in \leaves\left( T\right)$ with respect to attribute $X^{\left( i\right)}$.\\
$R$ & $\triangleq$ & Root Decision Tree, initial state, root of the Search Tree.\\
$\leaves\left( T\right)$ & $\triangleq$ & Set of leaves of Decision Tree $T$.\\
$X \in l$ & $\triangleq$ & Event, the subset described by $l$ contains $X$.\\
$p\left( l\right)$ & $=$ & $\prob\left[ X \in l\right]$, probability of $X \in l$.\\
$p_k\left( l\right)$ & $=$ & $\prob\left[ Y=k | X \in l\right]$, probability of $Y=k$ given $X \in l$.\\
$l\left( X\right)$ & $\triangleq$ & Leaf $l$ such that $X \in l$.\\
$\objective\left( T\right)$ & $=$ & $\prob\left[ T\left( X\right) = Y\right]$, accuracy of Decision Tree classifier $T$.\\
$T\left( X\right)$ & $\triangleq$ & $\textrm{argmax}_k\{p_k\left( l\left( X\right)\right)\}$, predicted class of $X$ according to DT $T$.\\
$\objectiver\left( T\right)$ & $\triangleq$ & Regularized Objective function of Search Node $T$.\\  
$\objectiver\left( T\right)$ & $=$ & $\prob\left[ T\left( X\right) = Y\right] - \lambda\splits\left( T\right)$.\\
$\splits\left( T\right)$ & $\triangleq$ & Number of splits in DT $T$.\\
$T \rightarrow T'$ &  $\triangleq$ & Transition from state $T$ to state $T'$.\\
$\children\left( T\right)$ & $\triangleq$ & Set of children of Search Node $T$, also set of next states from $T$.\\
$r\left( T, T'\right)$ & $\triangleq$ & Reward of transition $T \rightarrow T'$.\\
$\pi$ & $\triangleq$ & Policy, maps each state $T$ to a distribution over $\children\left( T\right)$.\\
$\pi\left( T' | T\right)$ & $\triangleq$ & Probability of transition $T \rightarrow T'$ according to policy $\pi$.\\
$\pi\left(T\right)$ & $\triangleq$ & Next state from $T$ according to policy $\pi$.\\
$T^{\left( 0\right)} \overset{\pi}{\rightarrow} \ldots, \overset{\pi}{\rightarrow} T^{\left( N\right)}$ & $\triangleq$ & Episode that stems from following policy $\pi$ starting from state $T$.\\
$\vf^\pi\left( T\right)$ & $\triangleq$ & Value of following policy $\pi$ starting from state $T$.\\
$\pi^*$ & $\in$ & $\textrm{Argmax}_\pi\vf^\pi\left( R\right)$, optimal policy at the initial state $R$.\\
$T^*$ & $\triangleq$ & Optimal Decision Tree with respect to $\objectiver$.\\
$\theta_T$ & $\triangleq$ & Posterior distribution on $\vf^{\pi^*}\left( T\right)$.\\
$\theta_{T, l}$ & $\triangleq$ & Posterior distribution on $\expec\left[ \ind\{ T\left( X\right) = Y\}|X \in l\right]$.\\
$\hat{p}\left( l\right)$ & $\triangleq$ & Estimator of $p\left( l\right)$.\\
$\hat{p}_k^{\left( i\right)}\left( l\right)$ & $=$ & $\frac{\sum_{j=1}^i\ind\{ X_j \in l\}\ind\{ Y_j = k\}}{\sum_{j=1}^i \ind\{ X_j \in l\}}$, estimator of $p_k\left( l\right)$.\\
$\hat{T}_i\left( l\right)$ & $=$ & $\textrm{Argmax}_{k}\{ \hat{p}_k^{\left( i\right)}\left( l\right)\}$, estimator of $T\left( l\right)$.\\
$\alpha_{T, l}, \beta_{T, l}$ & $=$ & Parameters of random variable $\theta_{T, l}$.\\
$\mu_{T, l}, \sigma_{T, l}$ & $=$ & Mean and standard deviation of $\theta_{T, l}$ respectively.\\
$\mu_T, \sigma_T$ & $=$ & Mean and standard deviation of $\theta_T$ respectively.\\
$n_{ ijk}\left( N, \eta\right)$ & $\triangleq$ & Given observed samples $\{ \left( X_s, Y_s\right)\}_{s=1}^N$, $n_{ ijk}\left( N, \eta\right)$ is the number samples with $X_s \in \eta, Y_s = k$ satisfying $X_s^{\left( i\right)} = j$.\\
\bottomrule
\end{tabular}
\end{center}
\label{tab:TableOfNotationForMyResearch}
\end{table}

\section{Experiments}

\label{appendix:experiments}

\begin{itemize}
    \item All of the experiments were run on a personal Machine (2,6 GHz 6-Core Intel Core i7), they are easily reproducible.
    \item The codes for DL8.5 and \osdt~ are available at \url{https://pypi.org/project/dl8.5/} and \url{https://github.com/xiyanghu/OSDT.git} respectively. We provide code for \method~and Fast-\method~at \url{https://github.com/Chaoukia/Thompson-Sampling-Decision-Trees}.
    \item In practice, the variances in Equation \eqref{eq:prior-search-leaf-2} can collapse to $0$ quickly, undermining exploration and slowing down the algorithm. We mitigate this issue by introducing an exponent $0 < \gamma < 1$ as follows:
    $$
    \left( \sigma_{\overline{T}}\right)^2 = \left( \sum_{l \in \leaves\left( T\right)} \hat{p}\left( l\right)^2 \left( \sigma_{T, l}\right)^2 \right)^\gamma
    $$
    This is also discussed in \citep[Section 3.1]{kocsis2006bandit}. The authors of UCT introduce an exponent on the bias terms $c_{t, s}$ in practice.\\
    In all our experiments, we use $\gamma=0.75$.
    \item During the Expansion step, creating all the children of the selected Search Node $T$ can be computationally expensive. Therefore, whenever $T$ is chosen, we expand it with respect to only one untreated leaf. Initially, all leaves of $T$ are marked as untreated. Upon selecting $T$, we pick the untreated leaf $l \in \leaves\left( T\right)$ with the highest Gini impurity to prioritize exploring promising parts of the Search Tree. We then generate the subset of $\children\left( T\right)$ by considering all possible split actions exclusively with respect to leaf $l$, subsequently marking $l$ as treated. We update $\pi_t$ at $T$ only when all leaves in $\leaves\left( T\right)$ have been treated. Only then, the children of $T$ in $\children\left( T\right)$ become eligible to be considered for future Selection (and the subsequent) steps.
\end{itemize}

\paragraph{Synthetic Experiment:} We use the default hyperparameters for VFDT and EFDT, for \method~and Fast-\method, we set $M=400, m=100, \lambda=0.05$.

\begin{figure}[tb]
  \centering
  \includegraphics[width=0.7\textwidth]{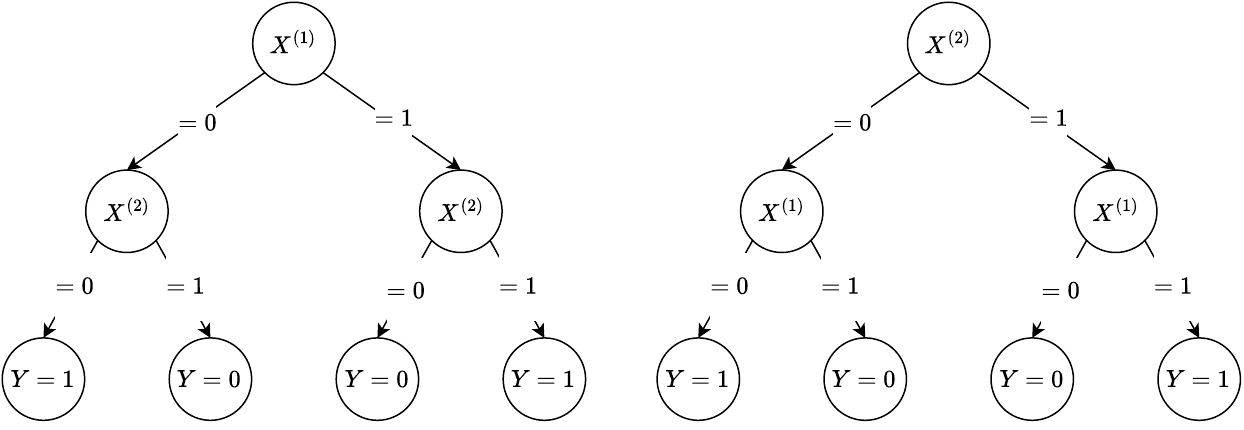}
  \caption{Equivalent representations of $Y$ as a Decision Tree for the synthetic experiment.}
  \label{fig:concept}
\end{figure}

\paragraph{Real World datasets:}
We ran both our methods with $M=1000$ iterations for MONK1 Original and MONK1 Drop Last, and with $M=10000$ iterations on the remaining datasets. All instances of \method~and Fast-\method~use $m=100$ number of samples per Simulation step. To get our accuracy-complexity frontier figures, we run the experiments with multiple values of $\lambda$ for \method, Fast-\method~and \osdt, and different values of the depth limit for DL8.5.
\begin{itemize}
    \item For DL8.5, the depth limits range from $1$ to $6$ exactly as reported by \cite{lin2020generalized}.
    \item For \method, Fast-\method~and \osdt, the set of $\lambda$ values is $0.1, 0.01, 0.0025, 0.0001$, which is a subset of the values used by \cite{lin2020generalized}. For all algorithms, we set a time limit of $10$ minutes.
\end{itemize}
In Figure \ref{fig:times}, \osdt~reaches its time limit on all the datasets except MONK1, where the last category is dropped by the Binary Encoding. \method~displays a similar behaviour but in less experiments than \osdt. On the other hand, while it is true that DL8.5 is clearly the fastest algorithm, it is also the algorithm that exhibits the least efficient accuracy-complexity frontier. In fact, when comparing the training accuracies in Figure \ref{fig:train_accuracies} and the test accuracies in Figure \ref{fig:test_accuracies}, we notice a clear overfitting trend by DL8.5 unlike the remaining algorithms. To conclude, we argue that Fast-\method~displays the best trade-off between optimality and execution time.

\begin{figure}
  \centering
  \includegraphics[width=1\textwidth]{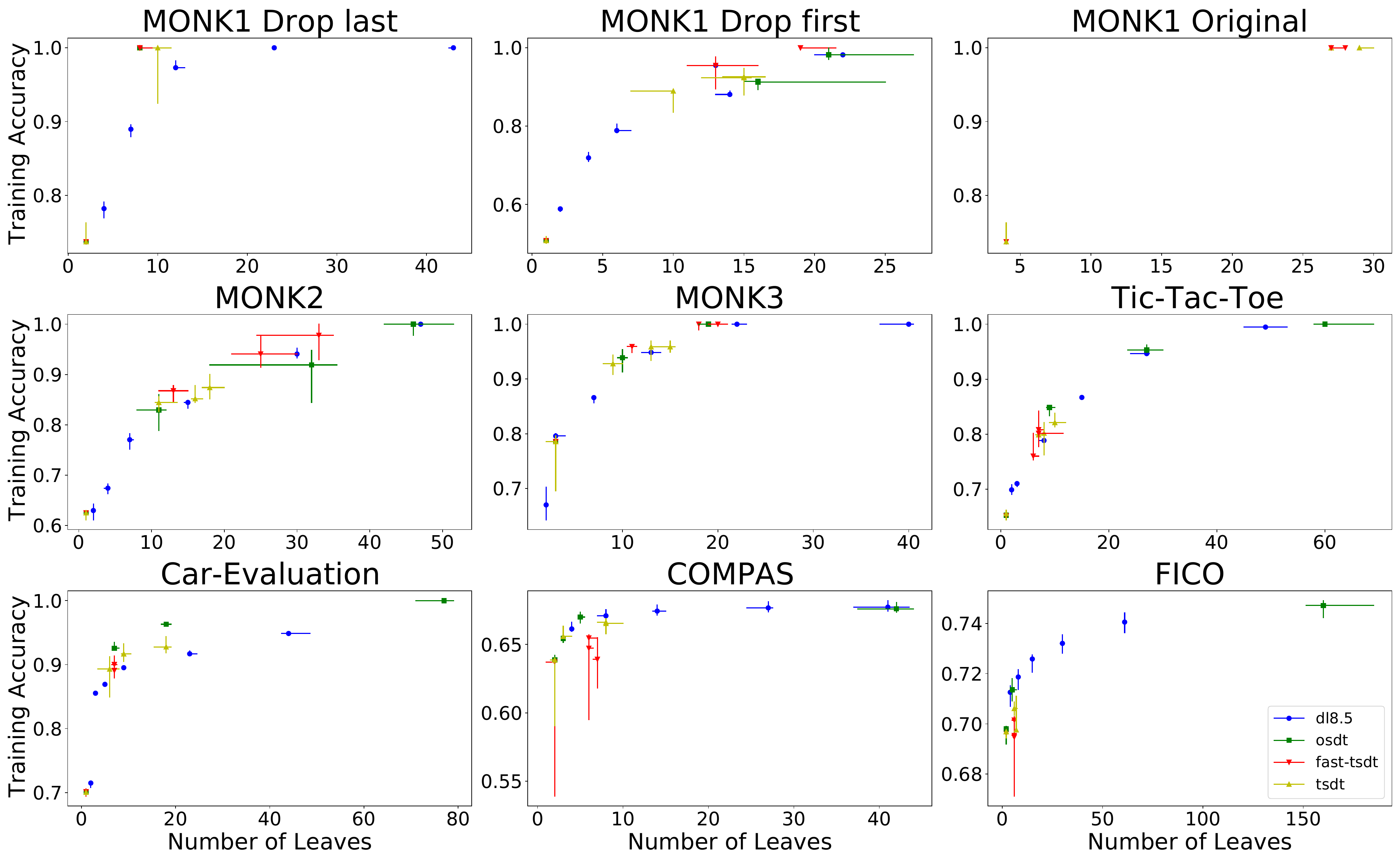}
  \caption{Cross-validation training accuracy of \method, \osdt~and DL8.5 as a function of the number of leaves.}
  \label{fig:train_accuracies}
\end{figure}

\begin{figure}
  \centering
  \includegraphics[width=1\textwidth]{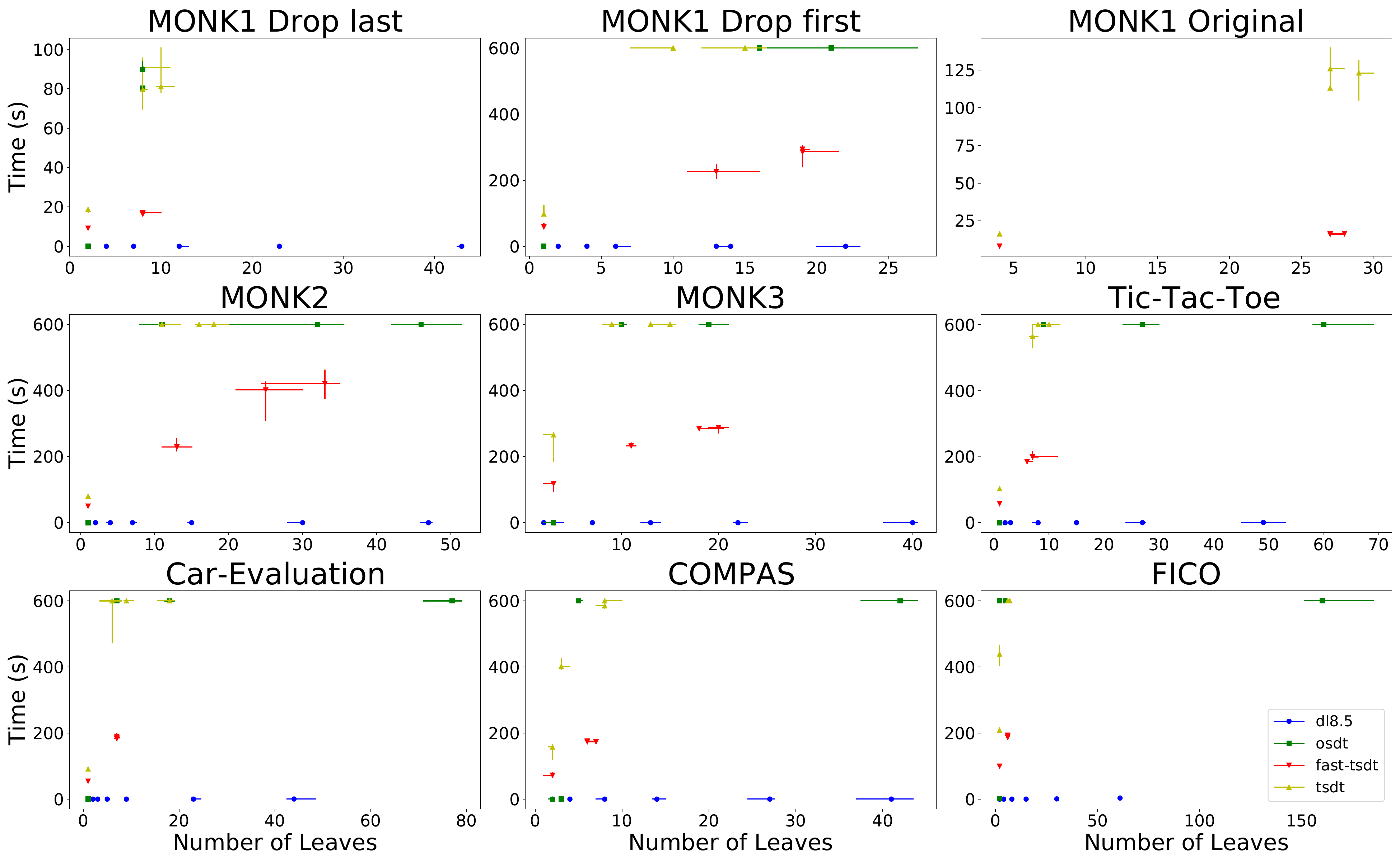}
  \caption{Cross-validation execution times of \method, \osdt~and DL8.5 as a function of the number of leaves.}
  \label{fig:times}
\end{figure}

One of the main demonstrations of \osdt was its Decision Tree solution to MONK1 Drop Last, and depicted in \citep[Figure 6]{hu2019optimal} in the comparison against BinOCT. Fast-\method~consistently achieves the same solution with 8 leaves, represented in Figure \ref{fig:tree-monk-1-last}, in much less time as documented in Figure \ref{fig:times}. Moreover, as stated in Section \ref{sec:experiments}, \cite{hu2019optimal} drop the last category of each attribute in their One-Hot Encoding of MONK1. However, without some form of prior knowledge, such choice is just arbitrary, and other options can be considered as well. The issue that arises from this is that these different options lead to solutions of different complexities as we explain in the following. In Figure \ref{fig:one-hot-encoding}, we aim at encoding some attribute variable $X$ that takes three possible values (categories). One-Hot Encoding encodes $X$ using two binary variables $X_0$ and $X_1$, if we drop the last category of $X$ then we represent $X=0$ with $X_0 = 1$, and $X=1$ with $X_1 = 1$; but to encode $X=2$ (the excluded category), we need $X_0 = 0, X_1=0$, which translates into a branch with two splits. Now, let us consider the following classification problem with $\prob\left[ Y=1 | X=2\right] = 0, \prob\left[ Y=1 | X\neq2\right] = 1$, then dropping the last category during One-Hot Encoding leads to an optimal Decision Tree with two splits, as represented in Figure \ref{fig:one-hot-encoding}, while dropping any other category instead leads to an optimal Decision Tree with only one split. In Monk 1, a similar phenomenon occurs where the choice of binary encoding has a significant impact on the complexity of the optimal DT. Indeed, dropping the last category leads to a simple solution with only 8 leaves, however, dropping the first category instead leads to a more complicated ad hard to find solution. In this case, Figures \ref{fig:train_accuracies} and \ref{fig:test_accuracies} clearly demonstrate that Fast-\method~consistently achieves a better solution than DL8.5 and \osdt. Figure \ref{fig:tree-monk-1-first} illustrates a 19-leaf solution that Fast-\method retrieves, achieving 100\% training and test accuracies, no similar solution has been found by DL8.5 and OSDT.

\begin{figure}[h]
  \centering
  \includegraphics[width=1\textwidth]{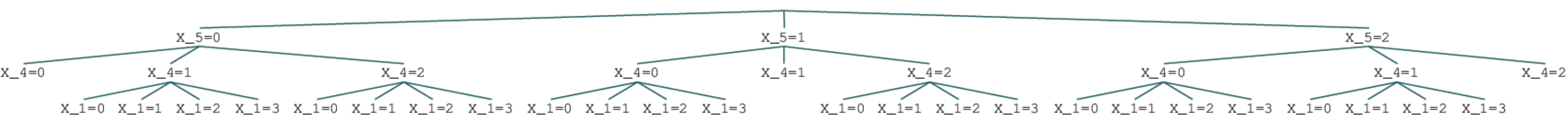}
  \caption{Fast-\method's solution to Monk 1 Original, the true optimal solution of least complexity.}
  \label{fig:tree-monk-1-original}
\end{figure}

\begin{figure}[H]
  \centering
  \includegraphics[width=0.4\textwidth]{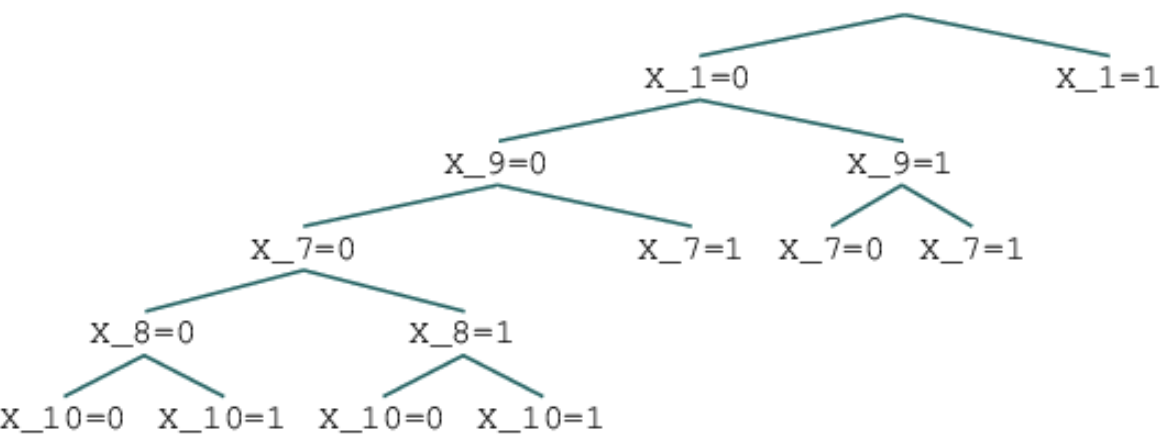}
  \caption{Fast-\method's solution to Monk 1 Drop Last, the true optimal solution of least complexity.}
  \label{fig:tree-monk-1-last}
\end{figure}

\begin{figure}[H]
  \centering
  \includegraphics[width=1\textwidth]{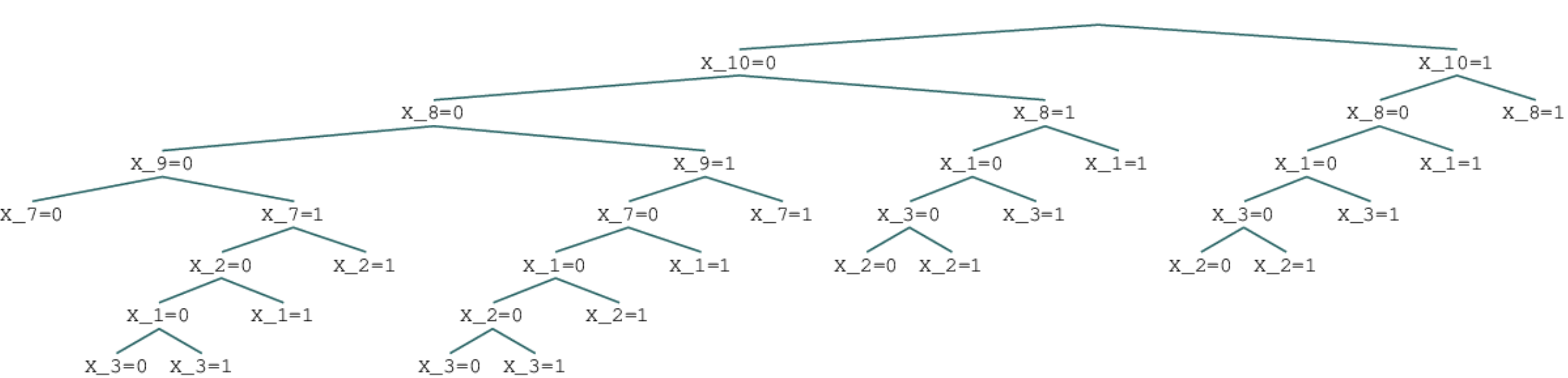}
  \caption{Fast-\method's solution to Monk 1 Drop First, this solution has $19$ leaves and achieves $100\%$ training and test accuracies; DL8.5 and \osdt~were unsuccessful in retrieving such solution.}
  \label{fig:tree-monk-1-first}
\end{figure}

\begin{figure}[H]
  \centering
  \includegraphics[width=1\textwidth]{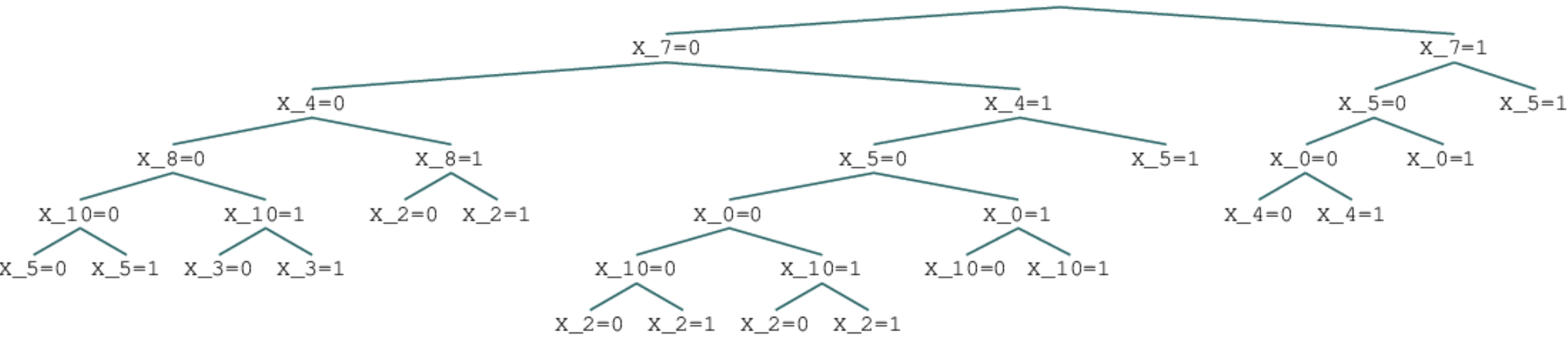}
  \caption{Fast-\method's solution to Monk 2, this solution has $17$ leaves and achieves $89.6\%$ training accuracy and $73.5\%$ test accuracies.}
  \label{fig:tree-monk-2}
\end{figure}

\begin{figure}[H]
  \centering
  \includegraphics[width=0.5\textwidth]{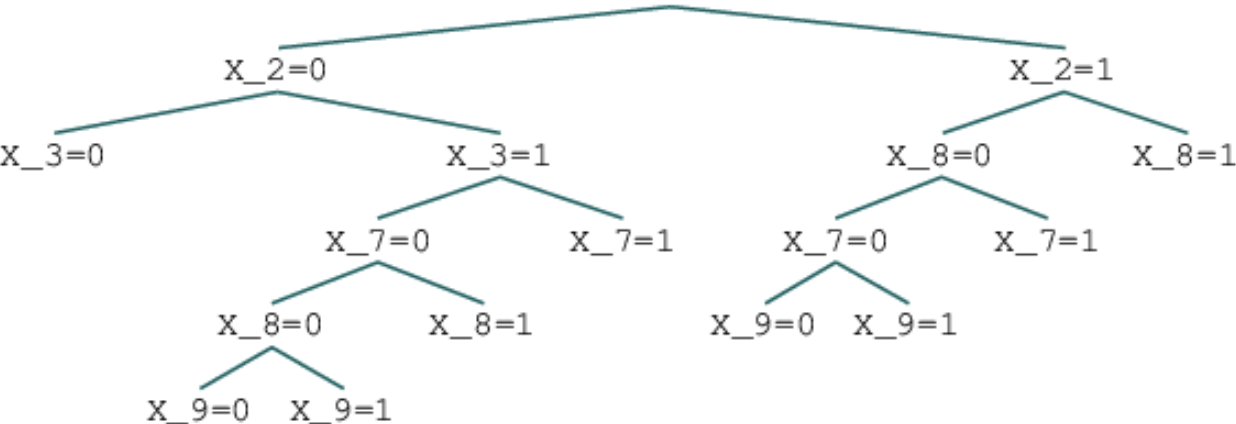}
  \caption{Fast-\method's solution to Monk 3, this solution achieves $93.8\%$ training accuracy and $92\%$ test accuracy.}
  \label{fig:tree-monk-3}
\end{figure}

\begin{figure}[H]
  \centering
  \includegraphics[width=0.5\textwidth]{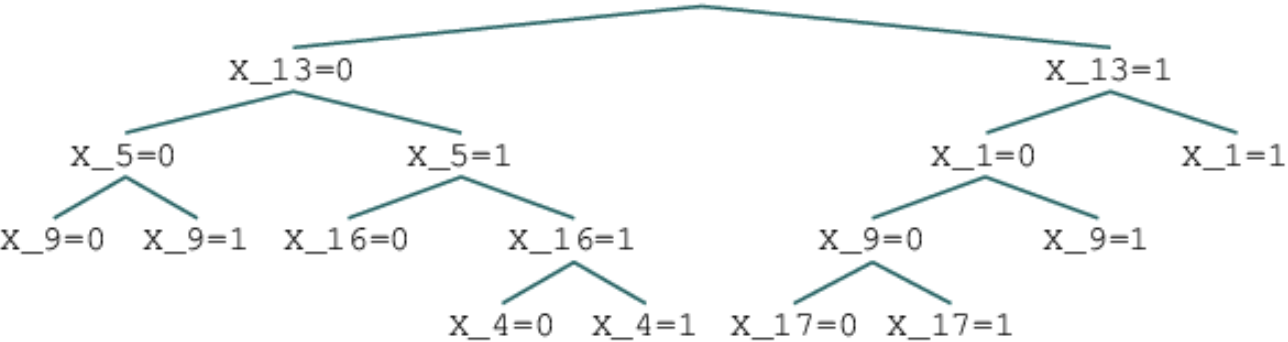}
  \caption{Fast-\method's solution to Tic-Tac-Toe, this solution achieves $79.8\%$ training accuracy and $81.8\%$ test accuracy.}
  \label{fig:tree-tic-tac-toe}
\end{figure}

\begin{figure}[H]
  \centering
  \includegraphics[width=0.5\textwidth]{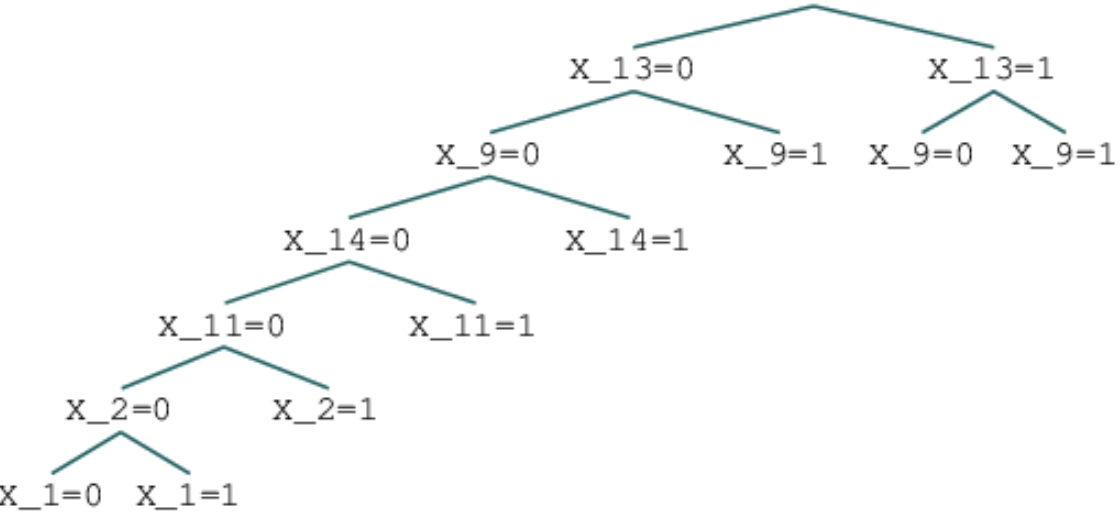}
  \caption{Fast-\method's solution to Car Evaluation, this solution achieves $89.9\%$ training accuracy and $90.8\%$ test accuracy.}
  \label{fig:tree-car-eval}
\end{figure}

\begin{figure}[H]
  \centering
  \includegraphics[width=0.4\textwidth]{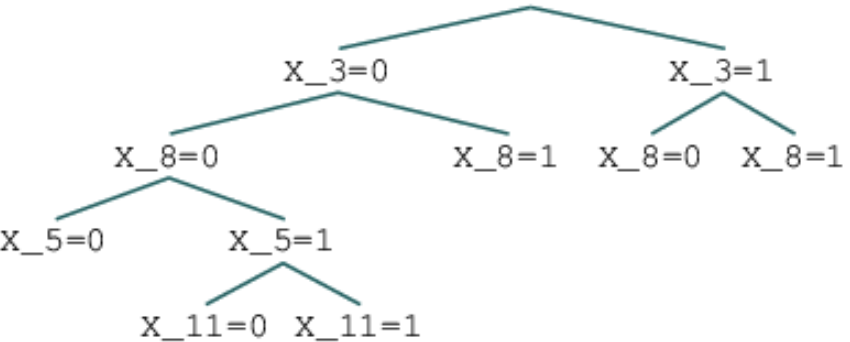}
  \caption{Fast-\method's solution to Compas, this solution achieves $66.4\%$ training accuracy and $66\%$ test accuracy.}
  \label{fig:tree-compas}
\end{figure}

\begin{figure}[H]
  \centering
  \includegraphics[width=0.4\textwidth]{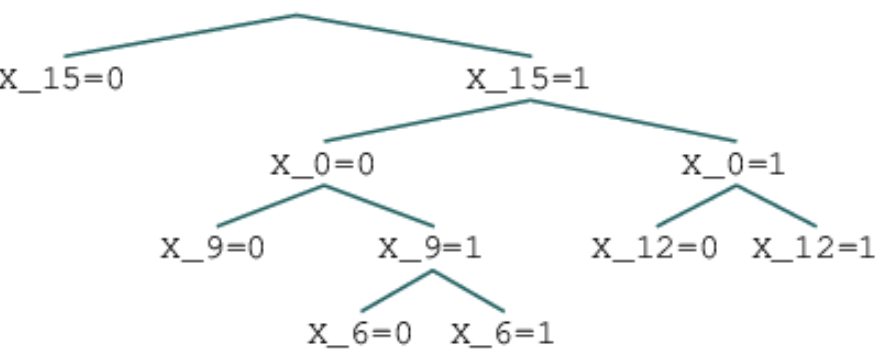}
  \caption{Fast-\method's solution to Fico, this solution achieves $69.4\%$ training accuracy and $71.8\%$ test accuracy.}
  \label{fig:tree-fico}
\end{figure}

\begin{figure}[H]
  \centering
  \includegraphics[width=0.7\textwidth]{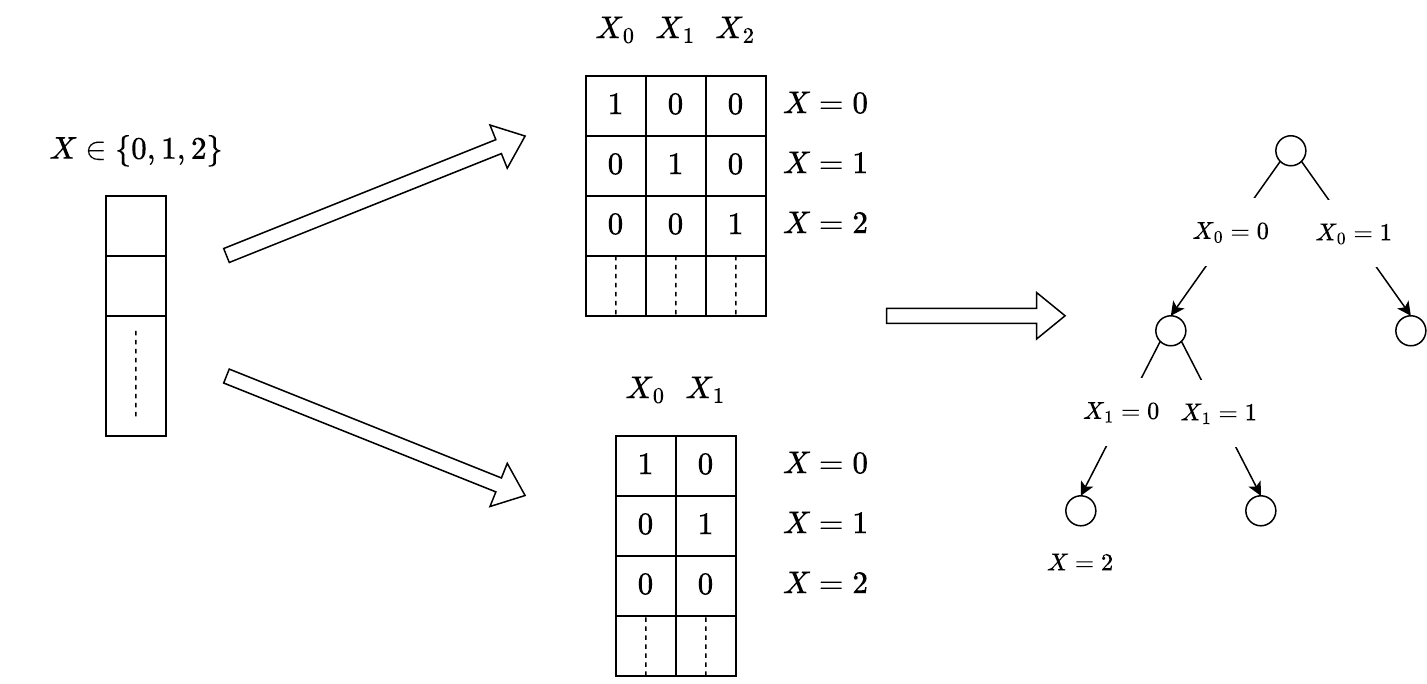}
  \caption{How the choice of which category to drop during One-Hot Encoding influences the resulting splits.}
  \label{fig:one-hot-encoding}
\end{figure}

\section{\method's Backpropagation details}

\label{appendix:backpropagation}

In this section, we present the details of the Backpropagation step performed by our first version of \method. Let $T$ be an internal Search Node, which is an internal state, and let $\children\left( T\right) = \{ T_1, \ldots, T_m\}$. We recall from Equation \eqref{eq:theta-internal} that we have:
$$
\theta_T = \max_{1 \le j \le m}\Big\{ -\lambda\ind\Big\{ T_j \neq \overline{T}\Big\} + \theta_{T_j} \Big\}
$$
Following the Inductive reasoning in Section \ref{sec:search-node}, we suppose that $\forall{1 \le j \le m}: \theta_{T_j} \sim \mathcal{N}\left( \mu_{T_j}, \left( \sigma_{T_j}\right)^2\right)$. Now, $\theta_T$ is a maximum over Normal variables that are conditionally independent given the observed data $\{ \left( X_i, Y_i\right)\}_{i=1}^N$ in $T$. We know that $\theta_T$ is not Normally distributed, but we can approximate its distribution with a Gaussian as is discussed in \citep{sinha2007advances}. Let us first consider the case with two independent Normal variables $\theta_1 \sim \mathcal{N}\left( \mu_1, \sigma_1^2\right), \theta_2 \sim \mathcal{N}\left( \mu_2, \sigma_2^2\right)$. Let $\theta = \max\{ \theta_1, \theta_2\}$, then the mean $\mu$ and variance $\sigma^2$ of $\theta$ satisfy:
\begin{align}
    \mu &= \mu_1\Phi\left( \alpha\right) + \mu_2\Phi\left( -\alpha\right) + \phi\left( \alpha\right)\sigma_m \label{eq:clark-mu} \\
    \sigma^2 &= \left( \mu_1^2 + \sigma_1^2\right)\Phi\left( \alpha\right) + \left( \mu_2^2 + \sigma_2^2\right)\Phi\left( -\alpha\right) + \left( \mu_1 + \mu_2\right)\sigma_m\phi\left( \alpha\right) - \mu^2 \label{eq:clark-sigma}
\end{align}
Where $\sigma_m = \sigma_1^2 + \sigma_2^2, \alpha = \frac{\mu_1 - \mu_2}{\sigma_m}$, and $\phi$ and $\Phi$ are respectively the probability density function and the cumulative distribution function of $\mathcal{N}\left( 0, 1\right)$ (see \citep{clark1961greatest}, \citep{10.5555/3023549.3023618}). The distribution of $\theta$ can be approximated with a Normal distribution by matching their first two moments, for short we call it Clark's approximation; \cite{sinha2007advances} provide an error analysis of this approximation. This motivates rewriting $\theta_T$ as a nested pair-wise maximum:
\begin{align*}
    \label{eq:nested}
    \theta_T &= \max\Big\{ -\lambda\ind\big\{ T_1 \neq \overline{T}\big\} + \theta_{T_1}, \max\{ -\lambda\ind\big\{ T_2 \neq \overline{T}\big\} + \theta_{T_2}, \ldots, \max\{ -\lambda\ind\big\{ T_{m-1} \neq \overline{T}\big\} + \theta_{T_{m-1}}, \\
    &-\lambda\ind\big\{ T_m \neq \overline{T}\big\} + \theta_{T_m}\}\} \ldots\Big\}
\end{align*}
$\max\{ -\lambda\ind\Big\{ T_{m-1} \neq \overline{T}\Big\} + \theta_{T_{m-1}}, -\lambda\ind\Big\{ T_m \neq \overline{T}\Big\} + \theta_{T_m}\}\}$ is a maximum of two (conditionally) independent Normal variables, thus we approximate its distribution as a Normal with Clark's approximation, and we recursively use Clark's approximation on the unfolding maximums of pairs to approximate the posterior distribution of $\theta_T$ as a Normal $\theta_T \sim \mathcal{N}\left( \mu_T, \left( \sigma_T\right)^2\right)$, $\mu_T$ and $\left( \sigma_T\right)^2$ are calculated with a recursive application of equations \eqref{eq:clark-mu} and \eqref{eq:clark-sigma}. A similar approximation was used by \cite{10.5555/3023549.3023618} in the context of MCTS with a Bayesian approach, but the policy the authors considered is not Thompson Sampling but rather a form of UCB that uses the quantiles of the posterior distributions to derive the index of the UCB policy.

\section{Proofs}

\label{appendix:proofs}

To prove Theorem \ref{thm:main}, we use and inductive reasoning that starts from Search Leaves.

\begin{lemma}
    \label{lemma:conv-search-leaves}
    Let $T$ be a Search Leaf and time $t$ the number of times $T$ is selected, then we have:
    $$
    \mu_T \xlongrightarrow[t \rightarrow \infty]{\textrm{a.s}} \vf^{\pi^*}\left( T\right), \left( \sigma_T\right)^2 \xlongrightarrow[t \rightarrow \infty]{\textrm{a.s}} 0
    $$
\end{lemma}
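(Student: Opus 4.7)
The plan is to exploit the fact that each selection of the Search Leaf $T$ delivers $m$ fresh i.i.d. samples from $P_{X,Y}$, so after $t$ selections the observation count $N = tm$ grows to infinity, and then apply the Strong Law of Large Numbers (SLLN) throughout. The quantities $\mu_{\overline{T}}$ and $(\sigma_{\overline{T}})^2$ are finite sums over the leaves $l \in \leaves(T)$, so it suffices to establish convergence term by term and invoke the assumed consistency $\hat{p}(l) \xrightarrow{\textrm{a.s.}} p(l)$ to glue the pieces together.

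First I would handle each leaf $l$ with $p(l) > 0$. By the SLLN the count $\sum_{i=1}^N \ind\{X_i \in l\}$ grows linearly in $N$ a.s., and the conditional empirical frequencies $\hat{p}_k^{(i)}(l)$ converge a.s. to $p_k(l)$ for every class $k$. When the argmax defining $T(l)$ is unique this forces $\hat{T}_i(l) = T(l)$ for all $i$ sufficiently large a.s.; if there are ties, the value $\expec[\ind\{T(X)=Y\} \mid X \in l] = \max_k p_k(l)$ is the same for every admissible tie-break, so the same conclusion holds. Consequently the fraction of indices at which $\ind\{\hat{T}_{i-1}(X_i)=Y_i\}$ differs from $\ind\{T(X_i)=Y_i\}$ on $\{X_i \in l\}$ is $o(N)$, and applying the SLLN to $\ind\{X_i \in l\}\ind\{T(X_i)=Y_i\}$ gives
$$\mu_{T,l} = \frac{\alpha_{T,l}}{\alpha_{T,l} + \beta_{T,l}} \xlongrightarrow[N \rightarrow \infty]{\textrm{a.s.}} \expec[\ind\{T(X)=Y\} \mid X \in l].$$
For the variance, since $\alpha_{T,l} + \beta_{T,l} = 2 + \sum_{i=2}^N \ind\{X_i \in l\}$ diverges a.s., the expression $(\sigma_{T,l})^2 = O\!\left( 1/(\alpha_{T,l}+\beta_{T,l}) \right)$ collapses to $0$ a.s.

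For leaves with $p(l)=0$ the contribution is controlled by the consistency assumption: $\hat{p}(l) \to 0$ a.s., and $\mu_{T,l}, (\sigma_{T,l})^2 \in [0,1]$ remain bounded, so $\hat{p}(l)\mu_{T,l}$ and $\hat{p}(l)^2(\sigma_{T,l})^2$ both vanish. Putting the pieces together via Equations \eqref{eq:prior-search-leaf-1} and \eqref{eq:prior-search-leaf-2},
$$\mu_{\overline{T}} = \sum_{l \in \leaves(T)} \hat{p}(l)\mu_{T,l} \xlongrightarrow[]{\textrm{a.s.}} \sum_{l} p(l)\,\expec[\ind\{T(X)=Y\}\mid X \in l] = \objective(T) = \vf^{\pi^*}(\overline{T}),$$
and $(\sigma_{\overline{T}})^2 \to 0$ a.s., which is exactly the claim.

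The main obstacle is the plug-in nature of $\hat{T}_{i-1}$ in the $\alpha_{T,l}, \beta_{T,l}$ recursions: because the predictor is itself learned from the same stream, the Beta sufficient statistics are not a standard Bernoulli SLLN setup. The resolution is the two-step argument above, isolating a random but a.s.\ finite time after which $\hat{T}_i(l)$ is stable (or, in the tie case, harmlessly oscillating among maximisers), so that the remaining discrepancy with an oracle update using the true $T$ is vanishing in density and an honest SLLN applies. Everything else, including the passage from per-leaf convergence to $\mu_{\overline{T}}$ and $(\sigma_{\overline{T}})^2$, is a routine application of Slutsky-type continuity for finite sums.
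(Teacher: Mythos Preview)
Your proposal is correct and follows essentially the same approach as the paper: both rely on the SLLN to show $\hat{p}_k^{(i)}(l)\to p_k(l)$, deduce that $\hat{T}_i(l)$ stabilises at $T(l)$ after an a.s.\ finite random time, and then apply the SLLN to the oracle indicators $\ind\{T(X_i)=Y_i\}$ on the remaining tail, while the variance collapses because $\alpha_{T,l}+\beta_{T,l}\to\infty$. The only cosmetic difference is that the paper works directly with an aggregate $\mu_T=\alpha_T(t)/(tm+1)$ and a single $\tau$-marginalisation, whereas you decompose per leaf via $\mu_{\overline{T}}=\sum_l\hat{p}(l)\mu_{T,l}$ and invoke the assumed consistency $\hat{p}(l)\to p(l)$; this is arguably more faithful to the definitions in Section~\ref{sec:search-leaf} but does not change the argument.
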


\begin{proof}[Proof of Lemma \ref{lemma:conv-search-leaves}]
    Let us start with the variance as it is a simpler result to prove. From the definition of $\alpha_T\left( t\right)$ and $\beta_T\left( t\right)$, we have $\alpha_T\left( t\right) + \beta_T\left( t\right) = tm + 1$, where we recall that $m$ is the number of observed samples from the stream in a single Simulation step. Therefore we have:
    \begin{align*}
        \left( \sigma_T\right)^2 &=  \frac{\alpha_T\left( t\right)\beta_T\left( t\right)}{\left( \alpha_T\left( t\right) + \beta_T\left( t\right)\right)^2\left( \alpha_T\left( t\right) + \beta_T\left( t\right) + 1\right)}\\
        &= \frac{\alpha_T\left( t\right)\beta_T\left( t\right)}{\left( tm + 1\right)^2\left( tm + 2\right)}\\
        &\le \frac{\left( tm + 1\right)^2}{\left( tm + 1\right)^2\left( tm + 2\right)}\\
        &\le \frac{1}{tm + 2}
    \end{align*}
    Therefore $\left( \sigma_T\right)^2$ converges, not only almost surely, but even surely to $0$ as $t \rightarrow \infty$.

    Now, let us show the result for the mean, we have:
    $$\mu_T = \frac{\alpha_T\left( t\right)}{tm + 1} = \frac{1}{1 + tm} + \frac{1}{1 + tm}\sum_{i=2}^{tm}\ind\Big\{ \hat{T}_{i-1}\left( X_i\right) = Y_i\Big\}$$
    We know that, for any $l \in \leaves\left( T\right)$ such that $p\left( l\right) > 0$, which are the only leaves that can contain inputs $X_i$, the number of observed samples grows to infinity almost surely as $t \rightarrow \infty$, therefore, by the Strong Law of Large numbers we have:
    $$
    \forall{k}\in \{ 1 \ldots, K\}: \hat{p}_k^{\left( i\right)}\left( l\right) \xlongrightarrow[i \rightarrow \infty]{\textrm{a.s}} p_k\left( l\right)
    $$
    Which means that:
    \begin{equation}
        \prob\left[ \forall{\epsilon}>0, \exists I > 0, \forall{i} \ge I, \forall{k} \in \{ 1, \ldots, K\}: \Big| \hat{p}_k^{\left( i\right)}\left( l\right) - p_k\left( l\right)\Big| \le \epsilon \right] = 1 \label{eq:p_k-as}
    \end{equation}
    Take $\epsilon = \frac{1}{2}\min_{k \neq k'}\Big\{ \Big| p_k\left( l\right) - p_{k'}\left( l\right)\Big|\Big\}$, in light of Equation \eqref{eq:p_k-as} we have:
    \begin{align}
        &\prob\left[ \exists \tau > 0, \forall{i} \ge \tau, \forall{k} \in \{ 1, \ldots, K\}: \Big| \hat{p}_k^{\left( i\right)}\left( l\right) - p_k\left( l\right)\Big| \le \epsilon \right] = 1 \nonumber\\
        \implies & \prob\left[ \exists \tau > 0, \forall{i} \ge \tau: \textrm{Argmax}_k \{ p_k\left( l\right)\} = \textrm{Argmax}_k \{ \hat{p}_k^{\left( i\right)}\left( l\right)\} \right] = 1 \nonumber\\
        \implies & \prob\left[ \exists \tau > 0, \forall{i} \ge \tau: \hat{T}_i\left( l\right) = T\left( l\right)\right] = 1 \label{eq:T_as}
    \end{align}
    Let us define $\tau > 0$ the random time such that:
    $$
    \forall{i} \ge \tau: \hat{T}_{i-1}\left( X_i\right) = T\left( X_i\right)
    $$
    Since Equation \eqref{eq:T_as} is satisfied for all leaves $l \in \leaves\left( T\right)$, we have $\prob\left[ \tau < \infty\right] = 1$.\\
    Let $0 < \epsilon' < \epsilon$, we want to show that:
    $$
    \prob\left[ \exists \tau' > 0, \forall{t} > \tau': \Big| \mu_T - \vf^{\pi^*}\left( T\right)\Big| \le \epsilon'\right] = 1
    $$
    By marginalising over $\tau > 0$, we have:
    \begin{align}
        &\prob\left[ \exists \tau' > 0, \forall{t} > \tau':  \Big| \mu_T - \vf^{\pi^*}\left( T\right)\Big| \le \epsilon'\right]\\
        & = \sum_{t' > 0}\prob\left[ \exists \tau' > 0, \forall{t} > \tau':  \Big| \mu_T - \vf^{\pi^*}\left( T\right)\Big| \le \epsilon' \Big| \tau = t'\right]\prob\left[ \tau = t'\right] \label{eq:margin}
    \end{align}
    Note that in this marginalisation, the term $\prob\left[ \tau = \infty\right]$ is absent because $\prob\left[ \tau = \infty\right] = 0$.\\
    From the definition of $\tau$, we can write $\mu_T$ as follows:
    $$
    \forall{t} > \tau: \mu_T = \frac{1}{1 + tm} + \frac{1}{1 + tm}\sum_{i=2}^{\tau m}\ind\Big\{ \hat{T}_{i-1}\left( X_i\right) = Y_i\Big\} + \frac{1}{1 + tm}\sum_{i=\tau m + 1}^{tm}\ind\Big\{ T\left( X_i\right) = Y_i\Big\}
    $$
    Now, given $\tau = t'$, then for all $t > t'$ we have the following:
    $$
    \Big| \mu_T - \vf^{\pi^*}\left( T\right)\Big| \le \Big| \frac{1}{1 + tm} + \frac{1}{1 + tm}\sum_{i=2}^{t' m}\ind\Big\{ \hat{T}_{i-1}\left( X_i\right) = Y_i\Big\}\Big| + \Big| \frac{1}{1 + tm}\sum_{i=t' m + 1}^{tm}\ind\Big\{ T\left( X_i\right) = Y_i\Big\} - \vf^{\pi^*}\left( T\right)\Big|
    $$
    For the first term of the RHS, we have:
    $$
    \Big| \frac{1}{1 + tm} + \frac{1}{1 + tm}\sum_{i=2}^{t' m}\ind\Big\{ \hat{T}_{i-1}\left( X_i\right) = Y_i\Big\}\Big| \le \Big| \frac{1}{1 + tm} + \frac{t'm - 1}{1 + tm}\Big| \underset{t \rightarrow \infty}{\longrightarrow} 0
    $$
    Hence, there exists $\tau_1 > 0$ such that:
    $$
    \forall{t} > \tau_1: \Big| \frac{1}{1 + tm} + \frac{1}{1 + tm}\sum_{i=2}^{t' m}\ind\Big\{ \hat{T}_{i-1}\left( X_i\right) = Y_i\Big\}\Big| \le \frac{\epsilon'}{3}
    $$
    For the second term of the RHS, we bound it first as follows:
    \begin{align*}
        \Big| \frac{1}{1 + tm}\sum_{i=t' m + 1}^{tm}\ind\Big\{ T\left( X_i\right) = Y_i\Big\} - \vf^{\pi^*}\left( T\right)\Big| &\le \Big| \frac{1}{tm - t'm}\sum_{i=t' m + 1}^{tm}\ind\Big\{ T\left( X_i\right) = Y_i\Big\} - \vf^{\pi^*}\left( T\right)\Big|\\
        &+ \Big| \left( \frac{1}{1 + tm} - \frac{1}{tm - t'm}\right)\sum_{i=t'm + 1}^{tm}\ind\Big\{ T\left( X_i\right) = Y_i\Big\}\Big|
    \end{align*}
    By the Strong Law of Large numbers, with probability $1$, there exists $\tau_2 > 0$ such that:
    $$
    \forall{t} > \tau_2: \Big| \frac{1}{tm - t'm}\sum_{i=t' m + 1}^{tm}\ind\Big\{ T\left( X_i\right) = Y_i\Big\} - \vf^{\pi^*}\left( T\right)\Big| \le \frac{\epsilon'}{3}
    $$
    On the other hand:
    $$
    \Big| \left( \frac{1}{1 + tm} - \frac{1}{tm - t'm}\right)\sum_{i=t'm + 1}^{tm}\ind\Big\{ T\left( X_i\right) = Y_i\Big\}\Big| \le \frac{t'm+1}{1 + tm} \underset{t \rightarrow \infty}{\longrightarrow} 0
    $$
    Therefore, there exists $\tau_3 > 0$ such that:
    $$
    \forall{t > \tau_3}: \Big| \left( \frac{1}{1 + tm} - \frac{1}{tm - t'm}\right)\sum_{i=t'm + 1}^{tm}\ind\Big\{ T\left( X_i\right) = Y_i\Big\}\Big| \le \frac{\epsilon'}{3}
    $$
    Now take $\tau' = \max\{\tau_1, \tau_2, \tau_3\}$, then we have:
    $$
    \Big| \mu_T - \vf^{\pi^*}\left( T\right)\Big| \le \frac{\epsilon'}{3} + \frac{\epsilon'}{3} + \frac{\epsilon'}{3} = \epsilon'
    $$
    Therefore, conditionally on $\tau = t'$, with probability $1$, there exists $\tau' > 0$ such that $\forall{t} > \tau': \Big| \mu_T - \vf^{\pi^*}\left( T\right)\Big| \le \epsilon'$, i.e:
    $$
    \prob\left[ \exists \tau' > 0, \forall{t} > \tau': \Big| \mu_T - \vf^{\pi^*}\left( T\right)\Big| \le \epsilon' \Big| \tau = t'\right] = 1
    $$
    From Equation \eqref{eq:margin}, we deduce that:
    \begin{align*}
        &\forall{0 < \epsilon' < \epsilon}:
        \prob\left[ \exists \tau' > 0, \forall{t} > \tau': \Big| \mu_T - \vf^{\pi^*}\left( T\right)\Big| \le \epsilon'\right] = \sum_{t' > 0}\prob\left[ \tau = t'\right] = 1\\
        \implies & \prob\left[ \forall{0 < \epsilon' < \epsilon}, \exists \tau' > 0, \forall{t} > \tau': \Big| \mu_T - \vf^{\pi^*}\left( T\right)\Big| \le \epsilon'\right] = 1\\
        \implies & \mu_T \xlongrightarrow[t \rightarrow \infty]{\textrm{a.s}} \vf^{\pi^*}\left( T\right)
    \end{align*}
\end{proof}

\begin{lemma}
    \label{lemma:visits-conv}
    For any Search Node $T$ with $t$ the number of visits of its parent and $N_T\left( t\right)$ the number of times $T$ has been visited up to time $t$, we have:
    $$
    N_T\left( t\right) \xlongrightarrow[t \rightarrow \infty]{\textrm{a.s}} \infty
    $$
\end{lemma}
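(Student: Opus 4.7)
The plan is to proceed by induction on the depth of $T$ in the Search Tree. The base case $T = R$ is immediate since $R$ is the starting state at every iteration. For the inductive step, let $\mathrm{Par}(T)$ denote the parent of $T$ and assume inductively that $\mathrm{Par}(T)$ is visited infinitely often almost surely; the goal is then to conclude the same for $T$.

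The argument is by contradiction via a conditional second Borel--Cantelli lemma. Suppose the event $E = \{\sup_t N_T(t) < \infty\}$ has positive probability. On $E$, there is a random index $n_0$ after which $T$ is never selected; consequently its posterior freezes at some $\mathcal{N}\bigl(\mu_T^*, (\sigma_T^*)^2\bigr)$. I would first verify that $\sigma_T^* > 0$. For a Search Leaf this uses $\alpha_{T,l}, \beta_{T,l} \ge 1$ together with the fact that once $T$ has been simulated at least once, at least one leaf satisfies $\hat{p}(l) > 0$, giving a strictly positive contribution to Equation \eqref{eq:prior-search-leaf-2}. For an internal Search Node, $(\sigma_T^*)^2$ is either $(\sigma_{\widetilde T}^*)^2$ (Fast-\method) or the output of the Clark recursion (\method) applied to children posteriors, whose variances are themselves strictly positive by the same reasoning propagated down to the Search Leaves.

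At each subsequent visit of $\mathrm{Par}(T)$, Thompson Sampling independently draws $\theta_T$ and the sibling posteriors $\theta_{T'}$ for $T' \in \children(\mathrm{Par}(T)) \setminus \{T\}$, and selects the child with the largest adjusted sample. Because rewards are bounded, all means $\mu_{T'}$ remain in a compact interval $[M_0, M_1]$ independent of the run, and the same is true for the shifts $-\lambda$ applied to split children. Picking any $M$ strictly above this range, the Gaussian CDF bound yields $\prob[\theta_{T'} \le M] \ge 1/2$ for every sibling regardless of $\sigma_{T'}$, while $\prob[\theta_T > M] = 1 - \Phi\bigl((M - \mu_T^*)/\sigma_T^*\bigr) =: q_0 > 0$ is a fixed strictly positive constant on $E$. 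Independence of the Thompson samples then gives a uniform lower bound $p_n \ge q_0 \cdot (1/2)^{|\children(\mathrm{Par}(T))|-1} =: p_0 > 0$ on the probability that $T$ is selected at the $n$-th parent visit beyond $n_0$.

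Writing $V_n$ for the indicator that $T$ is selected at the $n$-th visit of $\mathrm{Par}(T)$ and $p_n$ for $\prob[V_n = 1 \mid \mathcal{F}_{n-1}]$, L\'evy's extension of the second Borel--Cantelli lemma yields $\{\sum_n V_n = \infty\} = \{\sum_n p_n = \infty\}$ almost surely. On $E$ the inductive hypothesis provides infinitely many parent visits, and the uniform bound $p_n \ge p_0$ forces $\sum_n p_n = \infty$, so $N_T(t) = \sum_n V_n = \infty$, contradicting the definition of $E$. The main obstacle I anticipate is carefully checking that $(\sigma_T^*)^2 > 0$ propagates cleanly through both Backpropagation schemes; once this positivity is secured, the Gaussian tail estimate and the conditional Borel--Cantelli conclusion are essentially mechanical.
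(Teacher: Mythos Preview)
Your proposal is correct and shares the same skeleton as the paper's proof: both argue that if $T$ is eventually never selected, its Gaussian posterior freezes with strictly positive variance, so each subsequent Thompson draw at the parent selects $T$ with probability bounded below by a fixed positive constant, and a Borel--Cantelli argument forces infinitely many selections, a contradiction.

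The execution differs in two places worth noting. First, to lower-bound the selection probability uniformly over the siblings' evolving posteriors, the paper constructs an auxiliary Gaussian $\theta_{T'}$ with mean $\max_{i\neq j}\mu_{T_i} + f_n(t)$ and variance $\max_{i\neq j}\sigma_{T_i}^2$, tuned so that $\prob[\theta_{T'} \ge \max_{i\neq j}\theta_{T_i}] \ge 1/2$, and then reduces to the two-arm comparison $\prob[\theta_{T_j} > \theta_{T'}]$; you instead fix a deterministic threshold $M$ above the uniform mean bound and use $\prob[\theta_{T'} \le M] \ge 1/2$ for each sibling separately. Your route is more elementary and sidesteps the stochastic-domination step the paper's auxiliary construction is meant to justify. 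Second, the paper bounds $\prob[\forall t \ge \tau: T_j\text{ not selected}]$ by an infinite product of a constant strictly less than one, implicitly using that the per-step bound is uniform in the history; your appeal to L\'evy's conditional Borel--Cantelli packages the same dependence more transparently. You are also more explicit than the paper about verifying $\sigma_T^* > 0$ through both Backpropagation schemes; for Fast-\method\ this is immediate from \eqref{eq:theta-fast-tsdt}, and for \method\ it follows because Clark's variance formula \eqref{eq:clark-sigma} returns the exact variance of the maximum of two non-degenerate Gaussians, which is strictly positive. Finally, your induction on depth folds the content of the paper's Corollary~\ref{cor:visits-conv} into the lemma itself; the paper instead proves the lemma conditionally on $t\to\infty$ parent visits and defers the induction to the corollary, a purely organisational difference.
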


\begin{proof}[Proof of Lemma \ref{lemma:visits-conv}]
    Let $P\left( T\right)$ denote the parent of $T$, for convenience, we denote $\children\left( P\left( T\right)\right) = \{ T_1, \ldots, T_n\}$ and $T = T_j$, now we want to show that $N_{T_j}\left( t\right) \xlongrightarrow[t \rightarrow \infty]{\textrm{a.s}} \infty$. To do so, we want to prove the following result: $\prob\left[ \theta_{T_j} \le \max_{i \neq j}\{ \theta_{T_i}\}\right] < 1$ at all times $t$, i.e, the probability of choosing $T_j$ is always non-zero. We consider the auxiliary problem with $\children'\left( P\left( T\right)\right) = \{ T_j, T'\}$ where 
    \begin{equation*}
        \theta_{T'} \sim \mathcal{N}\left( \mu_{T'}, \left( \sigma_{T'}\right)^2\right), \mu_{T'} = \max_{i \neq j}\{ \mu_{T_i}\} + f_n\left( t\right), \sigma_{T'} = \max_{i \neq j}\{ \sigma_{T_i}\}
    \end{equation*}
    Where we will define $f_n\left( t\right)$ such that $\prob\left[ \theta_{T'} \ge \max_{i \neq j}\{ \theta_{T_i}\}\right] \ge \frac{1}{2}$ at all times $t$. In this case, we have the following bound: $
    \prob\left[ \theta_{T_j} \le \max_{i \neq j}\{ \theta_{T_i}\}\right] \le \prob\left[ \theta_{T_j} \le \theta_{T'}\right]$. We use the union bound:
    \begin{align*}
    \prob\left[ \theta_{T'} \ge \max_{i \neq j}\{ \theta_{T_i}\}\right] \ge 1 - \sum_{i \neq j} \prob\left[ \theta_{T'} < \theta_{T_i}\right]
    \end{align*}
    Since $\forall{i}\neq j: \theta_{T'} - \theta_{T_i} \sim \mathcal{N}\left( \mu_{T'} - \mu_{T_i}, \left( \sigma_{T'}\right)^2 + \left( \sigma_{T_i}\right)^2\right)$, we have:
    \begin{align*}
        \prob\left[ \theta_{T'} < \theta_{T_i}\right] &= \frac{1}{2}\textrm{erfc}\left( \frac{\max_{k \neq j}\{ \mu_{T_k}\} - \mu_{T_i} + f_n\left( t\right)}{\sqrt{2\left[ \left( \sigma_{T'}\right)^2 + \left( \sigma_{T_i}\right)^2\right]}}\right)\\
        &\le \frac{1}{2}\textrm{erfc}\left( \frac{f_n\left( t\right)}{2\sigma_{T'}}\right)
    \end{align*}
    Hence:
    \begin{equation*}
        \prob\left[ \theta_{T'} \ge \max_{i \neq j}\{ \theta_{T_i}\}\right] \ge 1 - \frac{n-1}{2}\textrm{erfc}\left( \frac{f_n\left( t\right)}{2\sigma_{T'}}\right)
    \end{equation*}
    Thus, we want $f_n\left( t\right)$ satisfying $\textrm{erfc}\left( \frac{f_n\left( t\right)}{2\sigma_{T'}}\right) \le \frac{1}{n-1}$.\\
    Take $f_n\left( t\right) = g_n\left( t\right)\sigma_{T'}$, hence it suffices to take $g_n\left( t\right) = 2\textrm{erfc}^{-1}\left( \frac{1}{n-1}\right)$ and thus $f_n\left( t\right) = 2\sigma_{T'}\textrm{erfc}^{-1}\left( \frac{1}{n-1}\right)$.

    On the other hand, we have the following:
    $$
    \prob\left[ \theta_{T_j} > \theta_{T'}\right] = \frac{1}{\sqrt{\pi}}\bigintsss_{\frac{\mu_{T'} - \mu_{T_j}}{\sqrt{2\left[ \left( \sigma_{T'}\right)^2 + \left( \sigma_{T_j}\right)^2\right]}}}^{\infty} e^{-u^2}du > 0
    $$
    Therefore we deduce that:
    $$
    \forall{t} > 0: \prob\left[ \theta_{T_j} \le \max_{i \neq j}\{ \theta_{T_i}\}\right] \le \prob\left[ \theta_{T_j} \le \theta_{T'}\right] < 1
    $$
    To show that $N_{T_j}\left( t\right) \xlongrightarrow[t \rightarrow \infty]{\textrm{a.s}} \infty$, we will equivalently prove:
    $$
    \prob\left[ \exists{M} > 0, \forall{t} > 0: N_{T_j}\left( t\right) < M\right] = 0
    $$
    The event $\{ \exists{M} > 0, \forall{t} > 0: N_{T_j}\left( t\right) < M\}$ can be rewritten as the event $\{ \exists{\tau > 0}, \forall{t \ge \tau}: \theta_{T_j} \le \max_{i \neq j}\{ \theta_i\}\}$, which means that there exists a time $\tau > 0$ such that, from then on, $T_j$ will never be chosen again. Therefore:
    \begin{align*}
        \prob\left[ \exists{M} > 0, \forall{t} > 0: N_{T_j}\left( t\right) < M\right] &= \prob\left[ \exists{\tau > 0}, \forall{t \ge \tau}: \theta_{T_j} \le \max_{i \neq j}\{ \theta_i\}\right]\\
        &\le \sum_{\tau > 0}\prob\left[ \forall{t \ge \tau}: \theta_{T_j} \le \max_{i \neq j}\{ \theta_i\}\right]\\
        &\le \sum_{\tau > 0}\prod_{t \ge \tau}\prob\left[ \theta_{T_j} \le \max_{i \neq j}\{ \theta_i\}\right]\\
        &\le \sum_{\tau > 0}\prod_{t \ge \tau}\prob\left[ \theta_{T_j} \le \theta_{T'}\right]
    \end{align*}
    The third line comes from the fact that $\{\theta_{T_i}\}$ are independent. We recall that for all $t \ge \tau$:
    \begin{align*}
        \prob\left[ \theta_{T_j} \le \theta_{T'}\right] &= \frac{1}{\sqrt{\pi}}\bigintsss_{-\infty}^{\frac{\mu_{T'} - \mu_{T_j}}{\sqrt{2\left[ \left( \sigma_{T'}\right)^2 + \left( \sigma_{T_j}\right)^2\right]}}} e^{-u^2}du \le \frac{1}{\sqrt{\pi}}\bigintsss_{-\infty}^{\frac{1 + f_n\left( \tau\right) - \mu_{T_j}}{\sqrt{2}\sigma_{T_j}}} e^{-u^2}du < 1
    \end{align*}
    This comes from the fact that $\forall{1 \le i \le n}: 0 \le \mu_i \le 1$ and $f_n\left( t\right)$ is a decreasing function of $t$ because $\sigma_{T'}$ decreases with $t$. Since $\forall{t \ge \tau}: T_j$ is not chosen, then $\mu_{T_j}$ and $\sigma_{T_j}$ remain constant, and therefore:
    $$
    \prod_{t \ge \tau}\frac{1}{\sqrt{\pi}}\bigintsss_{-\infty}^{\frac{1 + f_n\left( \tau\right) - \mu_{T_j}}{\sqrt{2}\sigma_{T_j}}} e^{-u^2}du = 0
    $$
    Thus we deduce that:
    $$
    \prob\left[ \exists{M} > 0, \forall{t} > 0: N_{T_j}\left( t\right) < M\right] = 0
    $$
    Which concludes our proof.
\end{proof}

\begin{corollary}
    \label{cor:visits-conv}
    Let $t$ be the number of iterations of \method~, then the number of visits of any Search Node diverges almost surely to $\infty$ as $t \rightarrow \infty$.
\end{corollary}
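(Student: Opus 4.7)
The plan is to proceed by induction on the depth $d$ of the Search Node $T$ in the Search Tree, where by convention the root $R$ sits at depth $0$. The induction hypothesis at depth $d$ will assert that for any Search Node $T$ at depth $d$, $N_T(t) \xlongrightarrow[t \to \infty]{\textrm{a.s}} \infty$, where $t$ denotes the number of iterations of \method.

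For the base case $d = 0$, the argument is immediate: every iteration of Algorithm \ref{alg:TSDT} begins the Selection step from $R$, so $N_R(t) = t$, which diverges trivially.

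For the inductive step, I would let $T$ be a Search Node at depth $d+1$ and denote by $P(T)$ its parent, which sits at depth $d$. Writing $s(t) = N_{P(T)}(t)$, the induction hypothesis applied to $P(T)$ yields $s(t) \xlongrightarrow[t \to \infty]{\textrm{a.s}} \infty$. Meanwhile, Lemma \ref{lemma:visits-conv} applied to $T$, whose notion of time is precisely the number of visits of its parent, gives $N_T(s) \xlongrightarrow[s \to \infty]{\textrm{a.s}} \infty$. The desired conclusion for $T$ at depth $d+1$ will then follow by composing these two almost sure divergences.

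The only potential subtlety, which I expect to be a minor technicality rather than a real obstacle, lies in justifying this composition rigorously. I would argue as follows: letting $\Omega_1 = \{s(t) \to \infty\}$ and $\Omega_2 = \{N_T(s) \to \infty \text{ as } s \to \infty\}$, both events have probability $1$, hence $\prob[\Omega_1 \cap \Omega_2] = 1$. On this intersection, the composition $N_T(s(t))$ diverges to $\infty$ as $t \to \infty$, and since $N_T(t) = N_T(s(t))$ holds by definition, the inductive step closes. Since the Search Tree has only finitely many nodes at each finite depth and every Search Node is reached at some finite depth, the induction delivers the claim for all Search Nodes.
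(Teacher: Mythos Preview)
Your proposal is correct and matches the paper's approach exactly: the paper's proof simply states that the result follows by induction from Lemma~\ref{lemma:visits-conv} together with the observation that $t$ equals the number of visits of the root Search Node, and you have spelled out precisely that induction, including the composition-of-almost-sure-limits step that the paper leaves implicit.
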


\begin{proof}[Proof of Corollary \ref{cor:visits-conv}]

Corollary \ref{cor:visits-conv} is straightforward to prove by Induction using Lemma \ref{lemma:visits-conv} and the fact that $t$ is the number of visits of the Root Search Node.
    
\end{proof}

In what follows, let $T$ denote an internal Search Node with $\children\left( T\right) = \{ T_1, \ldots, T_n\}$ and $T_1$ the optimal child, i.e $\vf^{\pi^*}\left( T_1\right) = \max_{1 \le j \le n}\{ \vf^{\pi^*}\left( T_j\right)\}$.

\begin{lemma}
    \label{lemma:proba-opt}
    Let time $t$ denote the number of times that $T$ has been visited. Suppose that $ \forall{j} \in \{ 1, \ldots, n\}: \vf^{\pi^*}\left( T_j\right) \xlongrightarrow[N_{T_j}\left( t\right) \rightarrow \infty]{\textrm{a.s}} \mu_{T_j}, \sigma_{T_j} \xlongrightarrow[N_{T_j}\left( t\right) \rightarrow \infty]{\textrm{a.s}} 0$ , then we have:
    $$
    \lim_{t \rightarrow\infty}\pi_t\left( T_1 | T\right) = 1
    $$
    \emph{Note that we abuse the notation a little bit here. Indeed, in the main paper $\pi_t$ is the policy after $t$ iterations of \method, but here $\pi_t$ denotes the policy after choosing $T$ for $t$ times.}
\end{lemma}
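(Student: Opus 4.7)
The plan is to combine the hypothesised concentration of each child's posterior with a standard Gaussian pairwise-comparison argument. First I invoke Corollary~\ref{cor:visits-conv} at the parent $T$: since $t$ denotes the number of visits of $T$ and $t\to\infty$, every child $T_j$ is visited infinitely often, i.e.\ $N_{T_j}(t)\xlongrightarrow[t\to\infty]{\textrm{a.s}}\infty$. Intersecting these finitely many probability-one events with the hypothesised convergences yields, on a single event of probability one, the joint asymptotics $\mu_{T_j}\to\vf^{\pi^*}(T_j)$ and $\sigma_{T_j}\to 0$ for every $j\in\{1,\dots,n\}$.

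Next, introduce the suboptimality gaps $\Delta_j := \vf^{\pi^*}(T_1)-\vf^{\pi^*}(T_j)>0$ for $j\neq 1$, strictly positive by the (assumed) uniqueness of the optimal child. On the event above, for every $\delta>0$ there is a random time $\tau$ such that for all $t\geq\tau$ and all $j\neq 1$, $\mu_{T_1}-\mu_{T_j}\geq \Delta_j/2$ while $\sigma_{T_1}^2+\sigma_{T_j}^2\leq \delta$. Using that the children's posteriors are (conditionally) independent Gaussians, as already exploited in the proof of Lemma~\ref{lemma:visits-conv}, $\theta_{T_1}-\theta_{T_j}\sim\mathcal{N}(\mu_{T_1}-\mu_{T_j},\,\sigma_{T_1}^2+\sigma_{T_j}^2)$, hence
$$\prob\!\left[\theta_{T_j}\geq\theta_{T_1}\right]=\frac{1}{2}\,\textrm{erfc}\!\left(\frac{\mu_{T_1}-\mu_{T_j}}{\sqrt{2\left(\sigma_{T_1}^2+\sigma_{T_j}^2\right)}}\right).$$
The argument of $\textrm{erfc}$ diverges to $+\infty$ almost surely, so $\prob[\theta_{T_j}\geq\theta_{T_1}]\to 0$ a.s.\ for every $j\neq 1$.

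Finally, a union bound over the finitely many siblings closes the argument:
$$1\geq \pi_t(T_1\mid T)=\prob\!\left[\theta_{T_1}=\max_{1\leq j\leq n}\theta_{T_j}\right]\geq 1-\sum_{j\neq 1}\prob\!\left[\theta_{T_j}\geq\theta_{T_1}\right]\xlongrightarrow[t\to\infty]{\textrm{a.s}}1,$$
so $\pi_t(T_1\mid T)\to 1$. The only delicate points are pooling the almost-sure limits over the finitely many siblings into a single probability-one event and verifying that the concentrations are strong enough to drive the ratio inside $\textrm{erfc}$ to $+\infty$; both are immediate from the hypothesis and Corollary~\ref{cor:visits-conv}, so no deeper obstacle arises beyond this book-keeping. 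The argument reduces to a standard Thompson-sampling-on-Gaussians calculation once the concentration of the children's posteriors has been secured.
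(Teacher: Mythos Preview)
Your proposal is correct and follows essentially the same approach as the paper: invoke infinite visits of every child, pool the hypothesised concentrations $\mu_{T_j}\to\vf^{\pi^*}(T_j)$ and $\sigma_{T_j}\to 0$, then apply a union bound together with the Gaussian pairwise tail $\prob[\theta_{T_j}\ge\theta_{T_1}]=\tfrac12\,\textrm{erfc}(\cdot)$ to drive $\pi_t(T_1\mid T)\to 1$. One small correction: the result you need is Lemma~\ref{lemma:visits-conv} (where $t$ is the number of visits of the \emph{parent}), not Corollary~\ref{cor:visits-conv} (where $t$ is the number of iterations of the algorithm); the paper's own proof cites Lemma~\ref{lemma:visits-conv} at this step and then packages the concentration via a random hitting time $t(\epsilon)$ that it marginalises over, whereas you pool the almost-sure events directly---the substance is the same.
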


\begin{proof}[Proof of Lemma \ref{lemma:proba-opt}]
    We define the following events at time $t$:
    \begin{align*}
        \mathcal{M}\left( t, \epsilon\right) &= \bigcap_{j=1}^n\Bigg\{ \Bigg|\mu_{T_j} - \vf^{\pi^*}\left( T_j\right)\Bigg| \le \epsilon \Bigg\}\\
        \mathcal{V}\left( t, \epsilon\right) &= \bigcap_{j=1}^n\Bigg\{ \left( \sigma_{T_j}\right)^2 < \frac{\epsilon}{2} \Bigg\}
    \end{align*}
    and $t\left( \epsilon\right) > 0$ the random time such that $\forall{t}> t\left( \epsilon\right):$ $\mathcal{M}\left( t, \epsilon\right)$ and $\mathcal{V}\left( t, \epsilon\right)$ happen. By Lemma \ref{lemma:visits-conv}, we have $\forall{j} \in \{ 1, \ldots, n\}: N_{T_j}\left( t\right)\xlongrightarrow[t \rightarrow \infty]{\textrm{a.s}} \infty$, therefore $\prob\left[ t\left( \epsilon\right) < \infty\right] = 1$. Let $i\left( t\right)$ denote the chosen child at time $t$, we have $\prob\left[ i\left( t\right) = T_1\right] = \pi_t\left( T_1 | T\right)$. The introduction of $i\left( t\right)$ is purely for convenience purposes. We write:
    \begin{align*}
        \prob\left[ i\left( t\right) = T_1\right] &= \sum_{\tau > 0}\prob\left[ i\left( t\right) = T_1 \Big| t\left( \epsilon\right) = \tau\right]\prob\left[ t\left( \epsilon\right) = \tau\right]\\
        &\ge \sum_{1 \le \tau \le t}\prob\left[ i\left( t\right) = T_1 \Big| t\left( \epsilon\right) = \tau\right]\prob\left[ t\left( \epsilon\right) = \tau\right]
    \end{align*}
    Conditionally on $t\left( \epsilon\right) = \tau$, for $t \ge \tau$ we have the following:
    \begin{align*}
        \prob\left[ i\left( t\right) = T_1 \Big| t\left( \epsilon\right)\right] &= \prob\left[ \forall{j} \neq 1: \theta_{T_1} > \theta_{T_j} \Big| \mathcal{M}\left( t, \epsilon\right), \mathcal{V}\left( t, \epsilon\right)\right]\\
        &\ge 1 - \sum_{j \neq 1}\prob\left[ \theta_{T_1} \le \theta_{T_j} \Big| \mathcal{M}\left( t, \epsilon\right), \mathcal{V}\left( t, \epsilon\right)\right]
    \end{align*}
    Before we continue, we introduce the following notation $\forall{j}\neq 1: \Delta_j = \vf^{\pi^*}\left( T_1\right) - \vf^{\pi^*}\left( T_j\right) > 0, \Delta = \min_{j \neq 1} \Delta_j$. Let $C>0$ and define $\epsilon = \frac{\Delta}{4C}$, then for all $j \neq 1$:
    \begin{align*}
        &\prob\left[ \theta_{T_1} \le \theta_{T_j} \Big| \mathcal{M}\left( t, \epsilon\right), \mathcal{V}\left( t, \epsilon\right)\right]\\
        &\le \prob\left[ \theta_{T_1} \le \theta_{T_j} \Bigg| \Big| \mu_{T_1} - \vf^{\pi^*}\left( T_1\right) \Big| < \frac{\Delta_j}{4C}, \Big| \mu_{T_j} - \vf^{\pi^*}\left( T_j\right) \Big| < \frac{\Delta_j}{4C}, \left( \sigma_{T_1}\right)^2 < \frac{\Delta_j}{8C}, \left( \sigma_{T_j}\right)^2 < \frac{\Delta_j}{8C} \right]\\
        &\le \frac{1}{\sqrt{\pi}}\bigintsss_{4C}^{\infty} e^{-u^2}du = \frac{1}{2}\textrm{erfc}\left( 4C\right)
    \end{align*}
    Hence, we deduce that:
    $$
    \prob\left[ i\left( t\right) = T_1 \Big| t\left( \epsilon\right)\right] \ge 1 - \frac{|\children\left( T\right)| - 1}{2}\textrm{erfc}\left( 4C\right)
    $$
    and thus:
    $$
    \prob\left[ i\left( t\right) = T_1\right] \ge \left[ 1 - \frac{|\children\left( T\right)| - 1}{2}\textrm{erfc}\left( 4C\right) \right]\sum_{1 \le \tau \le t}\prob\left[ t\left( \epsilon\right) = \tau\right]
    $$
    Since $\sum_{\tau \ge 1}\prob\left[ t\left( \epsilon\right) = \tau\right] = 1$ (because $\prob\left[ t\left( \epsilon\right) = \infty\right] = 0$), by taking the limit, we get:
    $$
    \lim_{t \rightarrow \infty}\prob\left[ i\left( t\right) = T_1\right] \ge 1 - \frac{|\children\left( T\right)| - 1}{2}\textrm{erfc}\left( 4C\right)
    $$
    Since this is satisfied for all $C > 0$ and $\lim_{C \rightarrow \infty}\textrm{erfc}\left( 4C\right) = 0$, we deduce that:
    $$
    \lim_{t \rightarrow \infty}\prob\left[ i\left( t\right) = T_1\right] = \lim_{t \rightarrow\infty}\pi_t\left( T_1 | T\right) = 1
    $$
\end{proof}

\begin{lemma}
    \label{lemma:conv-general}
    Let time $t$ denote the number of times that the parent of $T$ has been visited. Under the same assumptions as Lemma \ref{lemma:proba-opt}, $T$ satisfies:
    $$
    \mu_T \xlongrightarrow[t \rightarrow \infty]{\textrm{a.s}} \vf^{\pi^*}\left( T\right), \left( \sigma_T\right)^2 \xlongrightarrow[t \rightarrow \infty]{\textrm{a.s}} 0
    $$
\end{lemma}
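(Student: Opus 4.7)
The plan is to propagate the convergence of the children's posteriors upward through the Backpropagation rule. Since the parent of $T$ is visited $t$ times with $t \to \infty$, Lemma \ref{lemma:visits-conv} gives $N_T\left( t\right) \xlongrightarrow[t \rightarrow \infty]{\textrm{a.s}} \infty$, and re-applying the same lemma with $T$ as the parent of each $T_j$ gives $N_{T_j}\left( t\right) \xlongrightarrow[t \rightarrow \infty]{\textrm{a.s}} \infty$ for every child $T_j \in \children\left( T\right)$. Together with the standing hypothesis inherited from Lemma \ref{lemma:proba-opt}, this yields $\mu_{T_j} \xlongrightarrow[t \rightarrow \infty]{\textrm{a.s}} \vf^{\pi^*}\left( T_j\right)$ and $\left( \sigma_{T_j}\right)^2 \xlongrightarrow[t \rightarrow \infty]{\textrm{a.s}} 0$ for each $j$. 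Setting $g_j\left( t\right) = -\lambda\ind\{ T_j \neq \overline{T}\} + \mu_{T_j}$ and $g_j^* = -\lambda\ind\{ T_j \neq \overline{T}\} + \vf^{\pi^*}\left( T_j\right)$, the Bellman optimality equation \eqref{eq:bellman-value} gives $\vf^{\pi^*}\left( T\right) = \max_j g_j^* = g_1^*$.

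For Fast-\method, the Backpropagation rule assigns $\theta_T$ the posterior of the child $\widetilde{T}$ whose contribution $g_{\widetilde{T}}$ is largest among $\children\left( T\right)$. Assuming first that $T_1$ is the unique maximiser, the strict gap $\Delta = g_1^* - \max_{j \neq 1}g_j^* > 0$ together with the a.s. convergence $g_j\left( t\right) \to g_j^*$ produces a random time $\tau < \infty$ beyond which every visit of $T$ selects $\widetilde{T} = T_1$. From that time on, $\mu_T = g_1\left( t\right) \to g_1^* = \vf^{\pi^*}\left( T\right)$ and $\left( \sigma_T\right)^2 = \left( \sigma_{T_1}\right)^2 \to 0$. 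If several children achieve the optimum, $\widetilde{T}$ may oscillate between them, but each such child contributes $g_j \to \vf^{\pi^*}\left( T\right)$ and $\left( \sigma_{T_j}\right)^2 \to 0$, so the conclusion is unchanged.

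For \method, the Backpropagation iterates Clark's pair-wise approximation \eqref{eq:clark-mu}--\eqref{eq:clark-sigma} over the independent Gaussians $\{ -\lambda\ind\{ T_j \neq \overline{T}\} + \theta_{T_j}\}_{j=1}^n$. The plan is to show that the resulting composed map is continuous at the limiting configuration of means $\left( g_j^*\right)_j$ with vanishing variances. For a single pairing of $\mathcal{N}\left( \mu_a, \sigma_a^2\right)$ and $\mathcal{N}\left( \mu_b, \sigma_b^2\right)$, when $\mu_a \neq \mu_b$ the ratio $\alpha = \left( \mu_a - \mu_b\right)/\sigma_m$ diverges to $\pm\infty$ and the behaviour $\Phi\left( \alpha\right) \to \ind\{ \mu_a > \mu_b\}$, together with the super-exponential decay $\phi\left( \alpha\right)\sigma_m \to 0$, turns \eqref{eq:clark-mu} and \eqref{eq:clark-sigma} into $\mu \to \max\{ \mu_a, \mu_b\}$ and $\sigma^2 \to 0$. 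When $\mu_a = \mu_b$, the formulas reduce to $\mu = \mu_a + \phi\left( 0\right)\sigma_m \to \mu_a$ and $\sigma^2 \to 0$ directly. Iterating these limits through the finitely many nested pair-wise calls yields $\mu_T \to \max_j g_j^* = \vf^{\pi^*}\left( T\right)$ and $\left( \sigma_T\right)^2 \to 0$ almost surely.

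The main obstacle is that each Clark step is discontinuous at the degenerate point where two means coincide and both variances vanish, so the nested recursion must be controlled uniformly. The cleanest route is to work on the almost sure event on which, beyond a random $\tau$, every $\mu_{T_j}$ is within $\Delta/4$ of its limit and every $\left( \sigma_{T_j}\right)^2$ sits below a threshold depending only on $\Delta$; on this event the ordering of means at every pair-wise step is stable, and an induction on the depth of the pair-wise recursion lets the per-step bounds telescope into the desired almost sure limits for $\mu_T$ and $\left( \sigma_T\right)^2$.
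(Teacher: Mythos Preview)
Your proposal is correct and, for \method, follows essentially the same route as the paper: both arguments reduce the nested Clark recursion to a pair-wise analysis and propagate the limits $\mu_{T_j}\to\vf^{\pi^*}(T_j)$, $\sigma_{T_j}\to 0$ through formulas \eqref{eq:clark-mu}--\eqref{eq:clark-sigma} by induction on the number of children. The paper's proof simply writes out the $n=2$ base case (observing $\sigma_m\to 0$, $\alpha\to\pm\infty$, hence $\Phi(\alpha)\to 1$, $\phi(\alpha)\sigma_m\to 0$) and then unfolds the nested maximum via an auxiliary $T'$ collecting $\{T_2,\ldots,T_{n+1}\}$, which is exactly your ``induction on the depth of the pair-wise recursion.''

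Two minor differences are worth noting. First, you also treat Fast-\method\ explicitly via the argmax rule \eqref{eq:theta-fast-tsdt}; the paper's written proof of this lemma only spells out the Clark (\method) case, so your argument is the more complete of the two in that respect. Second, you flag the tie case $g_i^*=g_j^*$ and the resulting discontinuity of the Clark map at the degenerate point, which the paper glosses over by writing $\alpha\to\infty$ as if the optimal child were always strictly best; your separate handling of the equal-means situation (where $\mu=\mu_a+\phi(0)\sigma_m\to\mu_a$ directly) patches this, and the final paragraph's ``stable ordering'' device is only needed for the strict-gap case, so the two pieces together cover all configurations.
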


\begin{proof}[Proof of Lemma \ref{lemma:conv-general}]
    By Induction on $n = |\children\left( T\right)|$, if $n = 2$ we have:
    \begin{align*}
    \mu_T &= \mu_{T_1}\Phi\left( \alpha\right) + \mu_{T_2}\Phi\left( -\alpha\right) + \phi\left( \alpha\right)\sigma_m\\
    \left( \sigma_T\right)^2 &= \left[ \left( \mu_{T_1}\right)^2 + \left( \sigma_{T_1}\right)^2\right]\Phi\left( \alpha\right) + \left[ \left( \mu_{T_2}\right)^2 + \left( \sigma_{T_2}\right)^2\right]\Phi\left( -\alpha\right) + \left( \mu_{T_1} + \mu_{T_2}\right)\sigma_m\phi\left( \alpha\right) - \left( \mu_T\right)^2
    \end{align*}
    Where $\sigma_m = \left(\sigma_{T_1}\right)^2 + \left(\sigma_{T_2}\right)^2, \alpha = \frac{\mu_1 - \mu_2}{\sigma_m}$, and $\phi$ and $\Phi$ are respectively the probability density function and the cumulative distribution function of $\mathcal{N}\left( 0, 1\right)$. Using Lemma \ref{lemma:visits-conv}, we have $N_{T_1}\left( t\right) \xlongrightarrow[t \rightarrow \infty]{\textrm{a.s}} \infty$ and $N_{T_2}\left( t\right) \xlongrightarrow[t \rightarrow \infty]{\textrm{a.s}} \infty$, therefore $\mu_{T_1} \xlongrightarrow[t \rightarrow \infty]{\textrm{a.s}} \vf^{\pi^*}\left( T_1\right), \sigma_{T_1} \xlongrightarrow[t \rightarrow \infty]{\textrm{a.s}} 0$ and $\mu_{T_2} \xlongrightarrow[t \rightarrow \infty]{\textrm{a.s}} \vf^{\pi^*}\left( T_2\right), \sigma_{T_2} \xlongrightarrow[t \rightarrow \infty]{\textrm{a.s}} 0$, which yields $\sigma_m \xlongrightarrow[t \rightarrow \infty]{\textrm{a.s}} 0$ and $\alpha \xlongrightarrow[t \rightarrow \infty]{\textrm{a.s}} \infty$, using these results with the formulas above, we deduce that:
    $$
    \mu_T \xlongrightarrow[t \rightarrow \infty]{\textrm{a.s}} \vf^{\pi^*}\left( T_1\right) = \max\{ \vf^{\pi^*}\left( T_1\right), \vf^{\pi^*}\left( T_2\right)\}, \; \sigma_T \xlongrightarrow[t \rightarrow \infty]{\textrm{a.s}} 0
    $$
    Now suppose the result holds true for some $n \ge 2$ and now consider $\children\left( T\right) = \{ T_1, \ldots, T_{n+1}\}$. We define $\theta_{T'}\left( t\right) \sim \mathcal{N}\left( \mu_{T'}, \left( \sigma_{T'}\right)^2\right)$ which approximates recursively the maximum for $\{ T_2, \ldots, T_{n+1}\}$. By the Induction hypothesis, we have:
    $$
    \mu_{T'} \xlongrightarrow[t \rightarrow \infty]{\textrm{a.s}} \max\{ \vf^{\pi^*}\left( T_2\right), \ldots, \vf^{\pi^*}\left( T_{n + 1}\right)\}, \; \sigma_{T'} \xlongrightarrow[t \rightarrow \infty]{\textrm{a.s}} 0
    $$
    Now we have $\theta_{T}\left( t\right) \sim \mathcal{N}\left( \mu_{T}, \left( \sigma_{T}\right)^2\right)$ approximating the maximum for $\{ T_1, T'\}$. By the Induction hypothesis again, we have:
    $$
    \mu_T \xlongrightarrow[t \rightarrow \infty]{\textrm{a.s}} \vf^{\pi^*}\left( T_1\right)=\max\{ \vf^{\pi^*}\left( T_1\right), \vf^{\pi^*}\left( T'\right)\}, \; \sigma_T \xlongrightarrow[t \rightarrow \infty]{\textrm{a.s}} 0
    $$
    This concludes the Induction proof.
\end{proof}

\begin{proof}[Proof of Theorem \ref{thm:main}]

By backward induction starting from the Search Leaves and going up to the Root Search Node, using Lemmas \ref{lemma:conv-search-leaves}, \ref{lemma:visits-conv}, \ref{lemma:conv-general} and Corollary \ref{cor:visits-conv}, it is straightforward to deduce the main result.
    
\end{proof}

\begin{lemma}
\label{lemma:std-lower-bound}
Let $T$ be a Search Leaf, with $t$ the number of visits of its parent and $N_T\left( t\right)$ the number of visits of $T$ up to time $t$. Then we have:
\begin{equation*}
    \forall{C} > |\leaves\left( T\right)|: N_{T}\left( t\right)m \le \frac{1}{2}\left( \frac{C}{|\leaves\left( T\right)|}\right)^{\frac{1}{2}} \implies \left( \sigma_T\right)^2 \ge \frac{1}{C}
\end{equation*}

\end{lemma}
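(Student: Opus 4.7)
The plan is to lower-bound $(\sigma_T)^2$ leaf-by-leaf and then aggregate via Cauchy--Schwarz. Writing $s_l := \alpha_{T,l} + \beta_{T,l}$, I would first establish a per-leaf bound of the form $(\sigma_{T,l})^2 \gtrsim 1/s_l^2$, then bound $\max_l s_l$ in terms of $N_T(t)m$, and finally combine the two with a Cauchy--Schwarz inequality exploiting $\sum_l \hat{p}(l) = 1$. The target estimate is $(\sigma_T)^2 \gtrsim 1/\bigl(|\leaves(T)|\,(N_T(t)m)^2\bigr)$, which, after substituting the hypothesis $N_T(t)m \le \tfrac{1}{2}\sqrt{C/|\leaves(T)|}$, yields the conclusion.

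For the per-leaf bound, Equations \eqref{eq:alpha}--\eqref{eq:beta} immediately imply $\alpha_{T,l}, \beta_{T,l} \ge 1$, so $(\alpha_{T,l}-1)(\beta_{T,l}-1) \ge 0$, giving $\alpha_{T,l}\beta_{T,l} \ge s_l - 1$. Since $s_l \ge 2$, this yields $(\sigma_{T,l})^2 = \alpha_{T,l}\beta_{T,l}/(s_l^2(s_l+1)) \ge (s_l-1)/(s_l^2(s_l+1)) \ge 1/(4 s_l^2)$. Next, the same equations show $s_l = 2 + \sum_{i=2}^{N_T(t)m}\mathds{1}\{X_i \in l\}$, so the partition of observed samples among the leaves of $T$ forces $\sum_l (s_l - 2) \le N_T(t)m - 1$, hence $\max_l s_l \le N_T(t)m + 1$ and $\min_l (\sigma_{T,l})^2 \ge 1/(4(N_T(t)m+1)^2)$.

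For the aggregation step, the product estimator from Section \ref{sec:p(l)} satisfies $\sum_{l \in \leaves(T)} \hat{p}(l) = 1$ by a telescoping argument, using that for each internal node $\psi$ of $T$, $\sum_{\eta \in \textrm{Sib}(\psi)}\hat{p}(\eta \mid \textrm{Parent}(\eta)) = 1$. Cauchy--Schwarz then gives $\sum_l \hat{p}(l)^2 \ge 1/|\leaves(T)|$, and bounding the convex combination below by the minimum:
$(\sigma_T)^2 = \sum_l \hat{p}(l)^2 (\sigma_{T,l})^2 \ge \bigl(\min_l (\sigma_{T,l})^2\bigr)\sum_l \hat{p}(l)^2 \ge 1/\bigl(4|\leaves(T)|(N_T(t)m+1)^2\bigr)$. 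Under the lemma's hypothesis, $(N_T(t)m)^2 \le C/(4|\leaves(T)|)$, and routine algebra then delivers $(\sigma_T)^2 \ge 1/C$.

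The main obstacle I anticipate is absorbing the additive ``$+1$'' in $s_l \le N_T(t)m + 1$ into the final constant: a verbatim application of the chain above actually proves the slightly stronger implication $N_T(t)m + 1 \le \tfrac{1}{2}\sqrt{C/|\leaves(T)|} \Rightarrow (\sigma_T)^2 \ge 1/C$. Recovering the exact form stated in the lemma therefore requires either sharpening the Beta-variance bound for small $s_l$ or invoking the hypothesis $C > |\leaves(T)|$ to absorb the residual constants; splitting into the cases $N_T(t)m = 0$ (where $(\sigma_T)^2 \ge 1/(12|\leaves(T)|)$ already) and $N_T(t)m \ge 1$ should close the gap.
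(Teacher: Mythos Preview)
Your proposal is essentially the paper's own argument: both lower-bound $\alpha_{T,l}\beta_{T,l}\ge s_l-1$ (the paper via minimising the quadratic $x(s_l-x)$ on $[1,s_l-1]$, you via $(\alpha-1)(\beta-1)\ge0$), both invoke $\sum_l \hat p(l)^2\ge 1/|\leaves(T)|$ (the paper phrases it as the collision-probability inequality), and both finish by substituting $N_T(t)m\le\tfrac12\sqrt{C/|\leaves(T)|}$ into a bound of the form $(\sigma_T)^2\ge 1/(4|\leaves(T)|\mathcal{C}^2)$. The additive-constant slack you flag is present in the paper as well (it silently needs $\mathcal{C}>5$, i.e.\ $C>100|\leaves(T)|$, rather than merely $C>|\leaves(T)|$), so your case-split remedy is if anything more careful than the original.
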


\begin{proof}[Proof of Lemma \ref{lemma:std-lower-bound}]

We have: 
$$\left( \sigma_{T}\right)^2 = \sum_l \left( \hat{p}\left( l\right)\right)^2 \left( \sigma_{T, l}\right)^2; \left( \sigma_{T, l}\right)^2 = \frac{\alpha_{T, l}\beta_{T, l}}{\left( \alpha_{T, l} + \beta_{T, l}\right)^2\left( 1 + \alpha_{T, l} + \beta_{T, l}\right)}$$
Let $n_{T, l} = \sum_{i=2}^t\ind\{ X_i \in l\}$ the number of samples effectively observed in leaf $l$ of $T$ up to the $t^{\textrm{th}}$ visit of the parent of $T$. Then from Equation \eqref{eq:alpha} and \eqref{eq:beta}, we have the following $\alpha_{T, l} + \beta_{T, l} = 2+n_{T, l}$ and $\alpha_{T, l}\beta_{T, l} = \alpha_{T, l}\left( 2 + n_{T, l} - \alpha_{T, l}\right)$ knowing that $1 \le \alpha_{T, l} \le 1 + n_{T, l}$.\\
Consider the function $f : x \mapsto x\left( 2 + n_{T, l} - x\right)$ defined on $\left[ 1, 1 + n_{T,l}\right]$, $f$ has a minimum $f\left( 1\right) = f\left( 1 + n_{T, l}\right) = 1 + n_{T, l}$, and therefore:
\begin{equation*}
    \left( \sigma_{T, l}\right)^2 \ge \frac{1 + n_{T, l}}{\left( 2 + n_{T, l}\right)^2\left( 3 + n_{T, l}\right)}
\end{equation*}
Now consider the function $g : x \mapsto \frac{1+x}{\left( 2+x\right)^2\left( 3+x\right)}$ defined for $x \ge 0$, $g$ is differentiable on $\mathbb{R}^*_+$ and $g'\left( x\right) = \frac{-2 -6x -2x^2}{\left( 2+x\right)^3\left( 3+x\right)^2} < 0$, hence $g$ is decreasing. Therefore, for $\mathcal{C} > n_{T, l}$ we get:
$$
\left( \sigma_{T, l}\right)^2 \ge g\left( n_{T, l}\right) \ge g\left( \mathcal{C}\right)
$$
Furthermore, the total number of observed samples in $T$ is obviously larger than the number of observed samples in leaf $l \in \leaves\left( l\right)$, thus $n_{T, l} \le N_T\left( t\right) m \le \frac{1}{2}\sqrt{\frac{C}{|\leaves\left( T\right)|}}$. Let us choose $\mathcal{C} > 5$, this leads to:
\begin{align*}
    \left( \sigma_T\right)^2 &\ge g\left( \mathcal{C}\right)\sum_{l \in \leaves\left( T\right)}\hat{p}\left( l\right)\\
    \left( \sigma_T\right)^2 &\ge \frac{1}{|\leaves\left( T\right)|}\frac{1+\mathcal{C}}{\left( 2+\mathcal{C}\right)^2\left( 3+\mathcal{C}\right)}\\
    &\ge \frac{1}{|\leaves\left( T\right)|}\frac{\mathcal{C}}{\left( \sqrt{2}\mathcal{C}\right)^2\left( 2\mathcal{C}\right)}\\
    &\ge \frac{1}{4^ |\leaves\left( T\right)|\mathcal{C}^{2}}
\end{align*}
The second inequality comes from the fact that the uniform distribution minimises the collision probability.

By taking $\mathcal{C} = \frac{1}{2}\left( \frac{C}{|\leaves\left( T\right)|}\right)^{\frac{1}{2}}$, we deduce the result of the Lemma.
    
\end{proof}

\begin{proof}[Proof of Theorem \ref{thm:visits-proba}]

Let us first consider the case with $n=2$, i.e $\children\left( T\right) = \{ T_1, T_2\}$ and then we will generalise the result for an arbitrary $n \ge 2$. We have $M_2 = 1$, thus we want to show that:
\begin{align*}
    \prob\left[ N_{T_2}\left( t\right)m \le \frac{\log t}{4|\leaves\left( T_2\right)|}\right]
    \le \exp\left[ -\frac{2}{t}\left( \frac{t^{3/4}}{\sqrt{\pi}\left( \sqrt{\frac{\log t}{4}} + \sqrt{\frac{\log t}{4}+2}\right)} - \frac{\log t}{4m|\leaves\left( T_2\right)|}\right)^2\right]
\end{align*}
The result will be valid for $T_1$ as well without loss of generality.

At time $t$, child $T_2$ is chosen if $\theta_{T_2} \ge \theta_{T_1}$, which motivates us to study $\prob\left[ \theta_{T_2} \ge \theta_{T_1}\right]$:\\
We know that $\theta_{T_2} - \theta_{T_1} \sim \mathcal{N}\left( \mu_{T_2} - \mu_{T_1}, \left( \sigma_{T_2}\right)^2 + \left( \sigma_{T_1}\right)^2\right)$, hence:
\begin{align*}
    \prob\left[ \theta_{T_2} \ge \theta_{T_1}\right] &= \prob\left[ \theta_{T_2} \ge \theta_{T_1} \Big| \mu_{T_2} < \mu_{T_1}\right]\prob\left[ \mu_{T_2} < \mu_{T_1}\right] + \prob\left[ \theta_{T_2} \ge \theta_{T_1} \Big| \mu_{T_2} \ge \mu_{T_1}\right]\prob\left[ \mu_{T_2} \ge \mu_{T_1}\right]
\end{align*}
Since $\prob\left[ \theta_{T_2} \ge \theta_{T_1} \Big| \mu_{T_2} < \mu_{T_1}\right] \le \prob\left[ \theta_{T_2} \ge \theta_{T_1} \Big| \mu_{T_2} \ge \mu_{T_1}\right]$, we have:
\begin{align*}
    \prob\left[ \theta_{T_2} \ge \theta_{T_1}\right] &\ge \prob\left[ \theta_{T_2} \ge \theta_{T_1} \Big| \mu_{T_2} < \mu_{T_1}\right]\\
    &\ge \frac{1}{\sqrt{\pi}}\bigintsss_{\frac{\mu_{T_1} - \mu_{T_2}}{\sqrt{2\left[ \left( \sigma_{T_1}\right)^2 + \left( \sigma_{T_2}\right)^2\right]}}}^{\infty} e^{-u^2}du\\
    &\ge \frac{1}{\sqrt{\pi}}\bigintsss_{\frac{\mu_{T_1} - \mu_{T_2}}{\sqrt{2\left( \sigma_{T_2}\right)^2}}}^{\infty} e^{-u^2}du
\end{align*}
In what follows, we consider $N_{T_2}\left( t\right)m \le \frac{C}{2|\leaves\left( T\right)|}$, and we will define $C$ as a function of $t$ later. According to Lemma $\ref{lemma:std-lower-bound}$ we have $\left(\sigma_{T_2}\right)^2 \ge \frac{|\mu_{T_1} - \mu_{T_2}|}{C}$, thus $\frac{\mu_{T_1} - \mu_{T_2}}{\sqrt{2\left( \sigma_{T_2}\right)^2}} \le \sqrt{\frac{C\left( \mu_{T_1} - \mu_{T_2}\right)}{2}} \le \sqrt{\frac{C}{2}}$ since by definition, all the means $\mu_{T_i} \in \left[ 0, 1\right]$. This leads to:
\begin{align*}
    \prob\left[ \theta_{T_2} \ge \theta_{T_1} \Big| \left( \sigma_{T_2}\right)^2 \ge \frac{|\mu_{T_1} - \mu_{T_2}|}{C} \right] &\ge \frac{1}{\sqrt{\pi}}\bigintsss_{\sqrt{\frac{C}{2}}}^{\infty} e^{-u^2}du\\
    &\ge \frac{1}{2}\textrm{erfc}\left( \sqrt{\frac{C}{2}}\right)
\end{align*}
$\textrm{erfc}\left( .\right)$ denotes the complementary error function. Using the lower bound in \citep{kschischang2017complementary}, we deduce that:
\begin{equation}
    \label{ineq:ii}
    \prob\left[ \theta_{T_2} \ge \theta_{T_1} \Big| \left( \sigma_{T_2}\right)^2 \ge \frac{|\mu_{T_1} - \mu_{T_2}|}{C} \right] \ge \frac{1}{2}\textrm{erfc}\left( \sqrt{\frac{C}{2}}\right) \ge \frac{\exp\left( \frac{-C}{2}\right)}{\sqrt{\pi}\left( \sqrt{\frac{C}{2}} + \sqrt{\frac{C}{2} + 2}\right)}
\end{equation}
Since $\forall{t'} \le t: N_{T_2}\left( t'\right) \le N_{T_2}\left( t\right)$, we have $N_{T_2}\left( t\right)m \le \frac{C}{2|\leaves\left( T\right)|} \implies N_{T_2}\left( t'\right)m \le \frac{C}{2|\leaves\left( T\right)|} \implies \left( \sigma_{T_2}\right)^2 \ge \frac{\Big|\mu_{T_1} - \mu_{T_2}\Big|}{C}$, and Inequality \eqref{ineq:ii} holds for all $1 \le t' \le t$. Hence we write:
\begin{align*}
    \prob\left[ N_{T_2}\left( t\right)m \le \frac{C}{2|\leaves\left( T_2\right)|}\right] &= \prob\left[ N_{T_2}\left( t\right)m \le \frac{C}{2|\leaves\left( T_2\right)|}, \left( \sigma_{T_2}\right)^2 \ge \frac{|\mu_{T_2} - \mu_{T_1}|}{C}\right]\\
    &= \prob\left[ N_{T_2}\left( t\right)m \le \frac{C}{2|\leaves\left( T_2\right)|}, \forall{1 \le t' \le t}: \left( \sigma_{T_2}\right)^2 \ge \frac{\Big|\mu_{T_2} - \mu_{T_1}\Big|}{C}\right]\\
    &\le \prob\left[ N_{T_2}\left( t\right)m \le \frac{C}{2|\leaves\left( T_2\right)|} \Bigg| \forall{1 \le t' \le t}: \left( \sigma_{T_2}\right)^2 \ge \frac{\Big|\mu_{T_2} - \mu_{T_1}\Big|}{C}\right]
\end{align*}
Given the event $\Bigg\{ \forall{1 \le t' \le t}: \left( \sigma_{T_2}\right)^2 \ge \frac{\Big|\mu_{T_2} - \mu_{T_1}\Big|}{C} \Bigg\}$, at each time $1 \le t' \le t: T_2$ is chosen with probability $\prob\left[ \theta_{T_2} \ge \theta_{T_1}\Bigg| \left( \sigma_{T_2}\right)^2 \ge \frac{\Big|\mu_{T_2} - \mu_{T_1}\Big|}{C}\right]$.\\
Let $\mathcal{C} = \frac{\exp\left( \frac{-C}{2}\right)}{\sqrt{\pi}\left( \sqrt{\frac{C}{2}} + \sqrt{\frac{C}{2} + 2}\right)}$ and define the i.i.d. random variables $Z_1, \ldots, Z_t$ such that $\forall{i}: Z_i \sim \textrm{Bernoulli}\left( \mathcal{C}\right)$.\\
Inequality \eqref{ineq:ii} and Lemma \ref{lemma:std-lower-bound} lead to:
\begin{align*}
    \prob\left[ N_{T_2}\left( t\right)m \le \frac{C}{2|\leaves\left( T_2\right)|}\right] &= \prob\left[ N_{T_2}\left( t\right)m \le \frac{C}{2|\leaves\left( T_2\right)|}, \forall{1 \le t' \le t}: \left( \sigma_{T_2}\right)^2 \ge \frac{\Big|\mu_{T_2} - \mu_{T_1}\Big|}{C}\right] \\
    &\le \prob\left[ N_{T_2}\left( t\right)m \le \frac{C}{2|\leaves\left( T_2\right)|} \Bigg| \forall{1 \le t' \le t}: \left( \sigma_{T_2}\right)^2 \ge \frac{\Big|\mu_{T_2} - \mu_{T_1}\Big|}{C}\right]\\
    & \le \prob\left[ \sum_{i=1}^t Z_i \le \frac{C}{2m|\leaves\left( T_2\right)|}\right]
\end{align*}
Using Hoeffding's inequality, for $\epsilon>0$ we have $\prob\left[ \sum_{i=1}^t Z_i - t\mathcal{C} \le -\epsilon\right] \le \exp\left( -\frac{2\epsilon^2}{t}\right)$.\\
Thus, by setting $\epsilon = t\mathcal{C} - \frac{C}{2m|\leaves\left( T_2\right)|} > 0$, we deduce:
\begin{align*}
    \prob\left[ N_{T_2}\left( t\right)m \le \frac{C}{2|\leaves\left( T_2\right)|}\right] &\le \prob\left[ \sum_{i=1}^t Z_i \le \frac{C}{2m|\leaves\left( T_2\right)|}\right]\\
    &\le \exp\left( -\frac{2\left( t\mathcal{C} - \frac{C}{2m|\leaves\left( T_2\right)|}\right)^2}{t}\right)
\end{align*}
Now let us find an adequate expression of $C$ as a function of $t$, hence we will write $\mathcal{C}\left( t\right), C\left( t\right)$.\\
We recall that $\epsilon = t\mathcal{C} - \frac{C\left( t\right)}{2m|\leaves\left( T_2\right)|} > 0$, thus a first condition is to have $C\left( t\right) < 2m|\leaves\left( T_2\right)|t\mathcal{C}$, which means that $C\left( t\right)$ has to be sublinear.\\
Recall that $\mathcal{C}\left( t\right) = \frac{\exp\left( \frac{-C\left( t\right)}{2}\right)}{\sqrt{\pi}\left( \sqrt{\frac{C\left( t\right)}{2}} + \sqrt{\frac{C\left( t\right)}{2} + 2}\right)}$, thus:
\begin{align*}
    C\left( t\right) &< 2m|\leaves\left( T_2\right)|t\frac{\exp\left( \frac{-C\left( t\right)}{2}\right)}{\sqrt{\pi}\left( \sqrt{\frac{C\left( t\right)}{2}} + \sqrt{\frac{C\left( t\right)}{2} + 2}\right)}\\
    \frac{\sqrt{\pi}}{2m|\leaves\left( T_2\right)|}C\left( t\right)\left( \sqrt{\frac{C\left( t\right)}{2}} + \sqrt{\frac{C\left( t\right)}{2} + 2}\right) &< t\exp\left( -\frac{C\left( t\right)}{2}\right)
\end{align*}
For any $\alpha > 0$, we cannot have $C\left( t\right) = t^\alpha$ because the RHS would converge to $0$ as $t \rightarrow \infty$ while the LHS would diverge to $\infty$. Hence, we consider $C\left( t\right) = a\log t$ for some $a>0$.
\begin{align*}
    \frac{\sqrt{\pi}}{2m|\leaves\left( T_2\right)|}a\log\left( t\right)\left( \sqrt{\frac{a\log t}{2}} + \sqrt{\frac{a\log t}{2} + 2}\right) &< t\exp\left( -\frac{a\log t}{2}\right)\\
    &< t^{1 - \frac{a}{2}}
\end{align*}
Thus we must have $0 < a < 2$.
\begin{align*}
    \prob\left[ N_{T_2}\left( t\right)m \le \frac{a\log t}{2|\leaves\left( T_2\right)|}\right] &\le \exp\left( -\frac{2\left( \frac{t^{1 - \frac{a}{2}}}{\sqrt{\pi}\left( \sqrt{\frac{a\log t}{2}} + \sqrt{\frac{a\log t}{2} + 2}\right)} - \frac{a\log t}{2m|\leaves\left( T_2\right)|}\right)^2}{t}\right)
\end{align*}
Since $\frac{2}{t}\left( \frac{t^{1 - \frac{a}{2}}}{\sqrt{\pi}\left( \sqrt{\frac{a\log t}{2}} + \sqrt{\frac{a\log t}{2} + 2}\right)} - \frac{a\log t}{2m|\leaves\left( T_2\right)|}\right)^2 = \mathcal{O}\left( t^{1-a}\right)$, we must have $0 < a < 1$; by taking $a=\frac{1}{2}$, we get:
\begin{align*}
    \prob\left[ N_{T_2}\left( t\right)m \le \frac{\log t}{4|\leaves\left( T_2\right)|}\right]
    \le \exp\left[ -\frac{2}{t}\left( \frac{t^{3/4}}{\sqrt{\pi}\left( \sqrt{\frac{\log t}{4}} + \sqrt{\frac{\log t}{4}+2}\right)} - \frac{\log t}{4m|\leaves\left( T_2\right)|}\right)^2\right]
\end{align*}
Following the exact same steps, we can show that if $|\mu_{T_2} - \mu_{T_1}| \le M$ where $M > 0$ some constant, we would have:
\begin{align*}
    \prob\left[ N_{T_2}\left( t\right)m \le \frac{\log t}{4|\leaves\left( T_2\right)|M}\right]
    \le \exp\left[ -\frac{2}{t}\left( \frac{t^{3/4}}{\sqrt{\pi}\left( \sqrt{\frac{\log t}{4}} + \sqrt{\frac{\log t}{4}+2}\right)} - \frac{\log t}{4m|\leaves\left( T_2\right)|M^2}\right)^2\right]
\end{align*}
This result constitutes our induction hypothesis for the following generalisation.

Let us now address the setting with $\children\left( T\right) = \{ T_1, \ldots, T_n\}$ where $n \ge 2$, and let $i \in \{ 1, \ldots, n\}$. Our idea is to transform this problem into a problem with two children and use the induction hypothesis.\\
We consider a new child $T'$ with parameters $\theta_{T'} \sim \mathcal{N}\left( \mu_{T'}, \left( \sigma_{T'}\right)^2\right)$ such that:
\begin{align*}
    \mu_{T'} &= \max_{j \neq i}\{ \mu_{T_j}\} + f_n\left( t\right)\\
    \sigma_{T'} &= \max_{j \neq i}\{ \sigma_{T_j}\}
\end{align*}
$f_n\left( t\right)$ is a function that we will derive later on.\\
Consider the setting with the new set of children $\children'\left( T\right) = \{ T_i', T'\}$ where $\theta_{T_i} = \theta_{T_i'}$ and $|\leaves\left( T_i\right)| = |\leaves\left( T'_i\right)|$.\\
For any $C>0$, we want $\prob\left[ N_{T_i}\left( t\right)m \le C\right] \le \prob\left[ N_{T_i'}\left( t\right)m \le C\right]$, to achieve this, it suffices to have $\prob\left[ \theta_{T'} \ge \max_{j \neq i}\{ \theta_{T_j}\}\right] \ge \frac{1}{2}$ because it means that the probability of choosing $T_i'$ in the problem with $\children'\left( T\right)$ is lower than the probability of choosing $T_i$ in the problem with $\children\left( T\right)$, which leads to $\prob\left[ N_{T_i}\left( t\right)m \le C\right] \le \prob\left[ N_{T_i'}\left( t\right)m \le C\right]$. Using the union bound, we have:
\begin{align*}
    \prob\left[ \theta_{T'} \ge \max_{j \neq i}\{ \theta_{T_j}\}\right] \ge 1 - \sum_{j \neq i} \prob\left[ \theta_{T'} < \theta_{T_j}\right]
\end{align*}
Since $\forall{j}\neq i: \theta_{T'} - \theta_{T_j} \sim \mathcal{N}\left( \mu_{T'} - \mu_{T_j}, \left( \sigma_{T'}\right)^2 + \left( \sigma_{T_j}\right)^2\right)$, we have:
\begin{align*}
    \prob\left[ \theta_{T'} < \theta_{T_j}\right] &= \frac{1}{2}\textrm{erfc}\left( \frac{\max_{k \neq i}\{ \mu_{T_k}\} - \mu_{T_j} + f_n\left( t\right)}{\sqrt{2\left[ \left( \sigma_{T'}\right)^2 + \left( \sigma_{T_j}\right)^2\right]}}\right)\\
    &\le \frac{1}{2}\textrm{erfc}\left( \frac{f_n\left( t\right)}{2\sigma_{T'}}\right)
\end{align*}
Hence:
\begin{equation*}
    \prob\left[ \theta_{T'} \ge \max_{j \neq i}\{ \theta_{T_j}\}\right] \ge 1 - \frac{n-1}{2}\textrm{erfc}\left( \frac{f_n\left( t\right)}{2\sigma_{T'}}\right)
\end{equation*}
Thus, we want $f_n\left( t\right)$ satisfying $\textrm{erfc}\left( \frac{f_n\left( t\right)}{2\sigma_{T'}}\right) \le \frac{1}{n-1}$.\\
Take $f_n\left( t\right) = g_n\left( t\right)\sigma_{T'}$, hence it suffices to take $g_n\left( t\right) = 2\textrm{erfc}^{-1}\left( \frac{1}{n-1}\right)$ and thus 
$$f_n\left( t\right) = 2\sigma_{T'}\textrm{erfc}^{-1}\left( \frac{1}{n-1}\right)$$
In order to use the induction hypothesis, let us bound $\Big|\mu_{T'} - \mu_{T_j}\Big|$:
\begin{align*}
    \Big|\mu_{T'} - \mu_{T_j}\Big| &= \Big| \max_{j \neq i}\{ \mu_{T_j}\} + f_n\left( t\right) - \mu_{T_i}\left( t\right) \Big|\\
    &\le 1 + 2\sigma_{T'}\textrm{erfc}^{-1}\left( \frac{1}{n-1}\right)
\end{align*}
For any $j \neq i$, we have $\left( \sigma_{T_j}\right)^2 \le \sqrt{\frac{1}{12}}$, thus $\Big|\mu_{T'} - \mu_{T_j}\Big| \le 1 + \sqrt{\frac{2}{\sqrt{3}}}\textrm{erfc}^{-1}\left( \frac{1}{n-1}\right)$.\\
By defining $M_n = 1 + \sqrt{\frac{2}{\sqrt{3}}}\textrm{erfc}^{-1}\left( \frac{1}{n-1}\right)$, we use the induction hypothesis to deduce that:
\begin{align*}
    \prob\left[ N_{T_i'}\left( t\right)m \le \frac{\log t}{4|\leaves\left( T_i'\right)|M_n}\right]
    \le \exp\left[ -\frac{2}{t}\left( \frac{t^{3/4}}{\sqrt{\pi}\left( \sqrt{\frac{\log t}{4}} + \sqrt{\frac{\log t}{4}+2}\right)} - \frac{\log t}{4m|\leaves\left( T_1'\right)|M_n^2}\right)^2\right]
\end{align*}
Since $\prob\left[ N_{T_i}\left( t\right)m \le \frac{\log t}{4|\leaves\left( T_i\right)|M_n}\right] \le \prob\left[ N_{T_i'}\left( t\right)m \le \frac{\log t}{4|\leaves\left( T_i'\right)|M_n}\right]$, we deduce that:
\begin{align*}
    \prob\left[ N_{T_i}\left( t\right)m \le \frac{\log t}{4|\leaves\left( T_i\right)|M_n}\right]
    \le \exp\left[ -\frac{2}{t}\left( \frac{t^{3/4}}{\sqrt{\pi}\left( \sqrt{\frac{\log t}{4}} + \sqrt{\frac{\log t}{4}+2}\right)} - \frac{\log t}{4m|\leaves\left( T_i\right)|M_n^2}\right)^2\right]
\end{align*}
    
\end{proof}

\end{document}